\documentclass[lettersize,journal]{IEEEtran}
\usepackage{amsmath,amsfonts}
\usepackage{algorithmic}
\usepackage{array}
\usepackage[caption=false,font=normalsize,labelfont=sf,textfont=sf]{subfig}
\usepackage{textcomp}
\usepackage{stfloats}
\usepackage{url}
\usepackage{verbatim}
\usepackage{graphicx}
\hyphenation{op-tical net-works semi-conduc-tor IEEE-Xplore}
\def\BibTeX{{\rm B\kern-.05em{\sc i\kern-.025em b}\kern-.08em
    T\kern-.1667em\lower.7ex\hbox{E}\kern-.125emX}}
\usepackage{balance}

\usepackage[ruled,linesnumbered]{algorithm2e}
\usepackage{ntheorem}
\usepackage{amssymb}
\usepackage{booktabs}
\usepackage{multirow}
\usepackage{ragged2e}

\graphicspath{{figs/}}
\DeclareMathOperator*{\argmax}{arg\,max}

\newtheorem{theorem}{Theorem}
\newtheorem{proof}{Proof}

\begin{document}

\title{Data-driven Knowledge Fusion for Deep Multi-instance Learning}

\author{
Yu-Xuan~Zhang,~\IEEEmembership{Student~Member,~IEEE},~Zhengchun~Zhou,~\IEEEmembership{Member,~IEEE},~Xingxing~He,\\
Avik~Ranjan~Adhikary,~\IEEEmembership{Member,~IEEE},~and~Bapi~Dutta
    \thanks{
    This work was supported in part by the National Natural Science Foundation of China (62131016).
    \emph{(Corresponding author: Zhengchun Zhou)}
    }
    \thanks{
    Y.-X. Zhang, Z. Zhou, and A. R. Adhikary are with the School of Information Science and Technology, Southwest Jiaotong University, Chengdu 611730, China.
    (e-mail: inki.yinji@gmail.com, zzc@swjtu.edu.cn, and avik.adhikary@ieee.org)
    }
    \thanks{
    X. He is with the School of Mathematics, Southwest Jiaotong University, Chengdu 611730, China.
    (e-mail: x.he@home.swjtu.edu.cn)
    }
    \thanks{
    B. Dutta is with the Department of Computer Science, University of Ja\'{e}n, Ja\'{e}n 23071, Spain.
    (e-mail: bdutta@ujaen.es)
    }
}

\maketitle

\begin{abstract}
Multi-instance learning (MIL) is a widely-applied technique in practical applications that involve complex data structures.
MIL can be broadly categorized into two types: traditional methods and those based on deep learning.
These approaches have yielded significant results, especially with regards to their problem-solving strategies and experimental validation, providing valuable insights for researchers in the MIL field.
However, a considerable amount of knowledge is often trapped within the algorithm, leading to subsequent MIL algorithms that solely rely on the model's data fitting to predict unlabeled samples.
This results in a significant loss of knowledge and impedes the development of more intelligent models.
In this paper, we propose a novel data-driven knowledge fusion for deep multi-instance learning (DKMIL) algorithm.
DKMIL adopts a completely different idea from existing deep MIL methods by analyzing the decision-making of key samples in the data set (referred to as the data-driven) and using the knowledge fusion module designed to extract valuable information from these samples to assist the model's training.
In other words, this module serves as a new interface between data and the model, providing strong scalability and enabling the use of prior knowledge from existing algorithms to enhance the learning ability of the model.
Furthermore, to adapt the downstream modules of the model to more knowledge-enriched features extracted from the data-driven knowledge fusion module, we propose a two-level attention module that gradually learns shallow- and deep-level features of the samples to achieve more effective classification.
We will prove the scalability of the knowledge fusion module while also verifying the efficacy of the proposed architecture by conducting experiments on 38 data sets across 6 categories.
\end{abstract}

\begin{IEEEkeywords}
Multi-instance learning (MIL), deep-learning, data-driven knowledge fusion, two-level attention, classification.
\end{IEEEkeywords}

\section{Introduction}

\IEEEPARstart{M}{achines} with the ability to think like humans have been the eternal pursuit of many artificial intelligence researchers.
Based on this, individuals from various fields are working tirelessly to contribute their best efforts, such as using evolutionary computation \cite{Back:1997:317,Dejong:2017:373388} to evolve finite state machines for events prediction based on the past observations;
using knowledge graph \cite{Chen:2020:112948,Hogan:2021:137} to identify errors and draw new conclusions from existing data;
using reinforcement learning \cite{Kaelbling:1996:237285,Arulkumaran:2017:2638} to train agents through trial-and-error interactions with the environment;
and using abductive learning \cite{Zhou:2019:13,Cai:2021:18151821} to combine machine learning with first-order logical reasoning.
In a similar vein, we hope that multi-instance learners can analyze existing methods and extract latent knowledge to enhance their models' capacity for learning.

Multi-instance learning (MIL) was developed by Dietterich et al. \cite{Dietterich:1997:3171} for drug activity prediction and was formulated as a paradigm for handling complex data structures.
In MIL, each data sample is called a bag with multiple instances.
And only bag labels are provided, whereas instance labels are either unavailable or nonexistent.
This kind of learning paradigm can better represent real-world phenomena, and up to this point, successful applications have been built in image classification \cite{Wu:2018:10651080,Yang:2021:54565467,Zeng:2022:110}, web page recommendation \cite{Zhou:2005:135147,Wei:2019:21092120,Huang:2022:108583}, video anomaly detection (VAD) \cite{Sultani:2018:64796488,Tian:2021:49754986,Li:2022:19}, and medical diagnosis \cite{Lin:2022:interventional,Zhu:2022:113,Shao:2021:21362147}, among others.

\begin{figure*}[!t]
    \centering
    \subfloat[Semantic example]{\includegraphics[width=0.48\hsize]{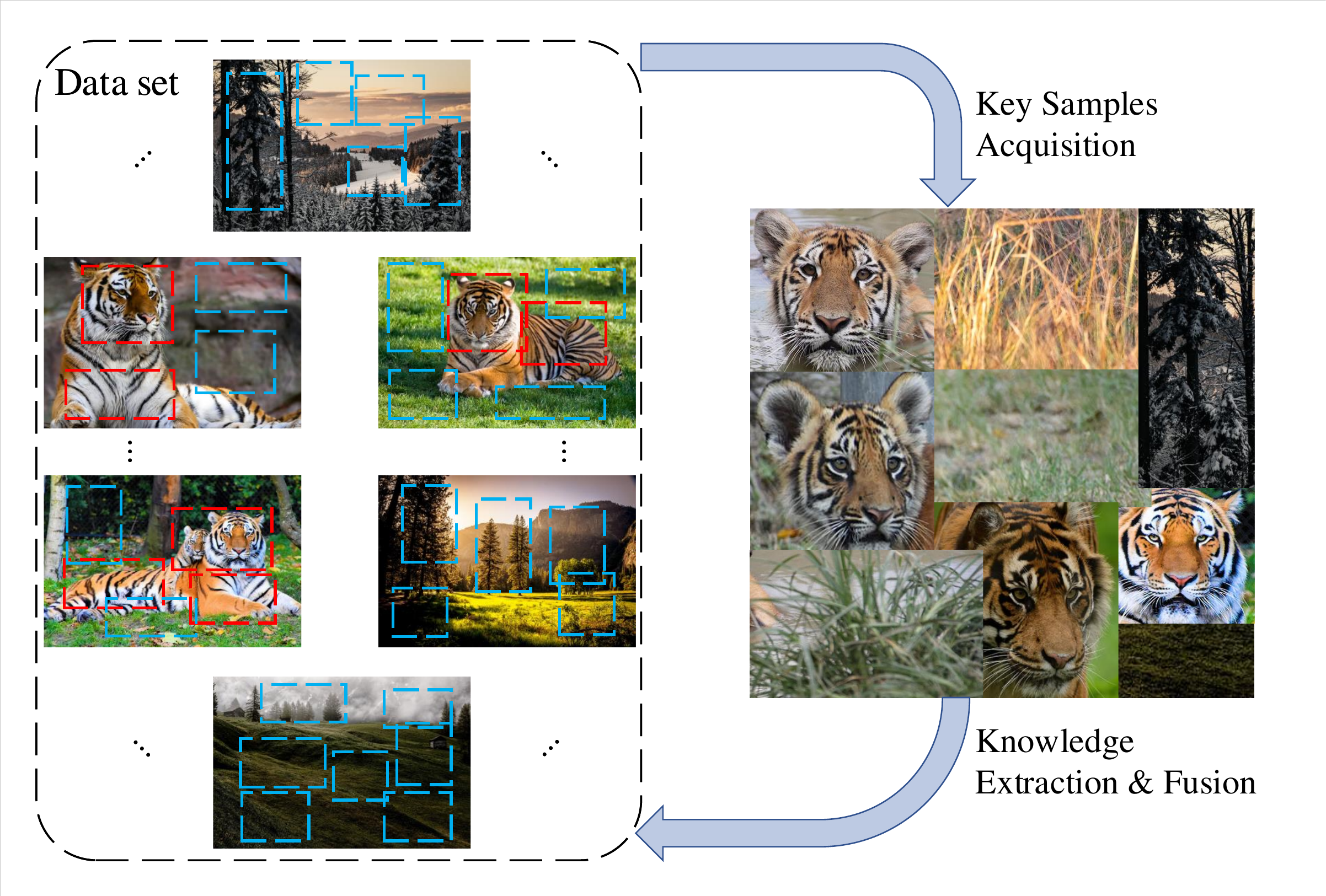}\label{fig: example}}
    \hspace{0.02\hsize}
    \subfloat[Model architecture]{\includegraphics[width=0.47\hsize]{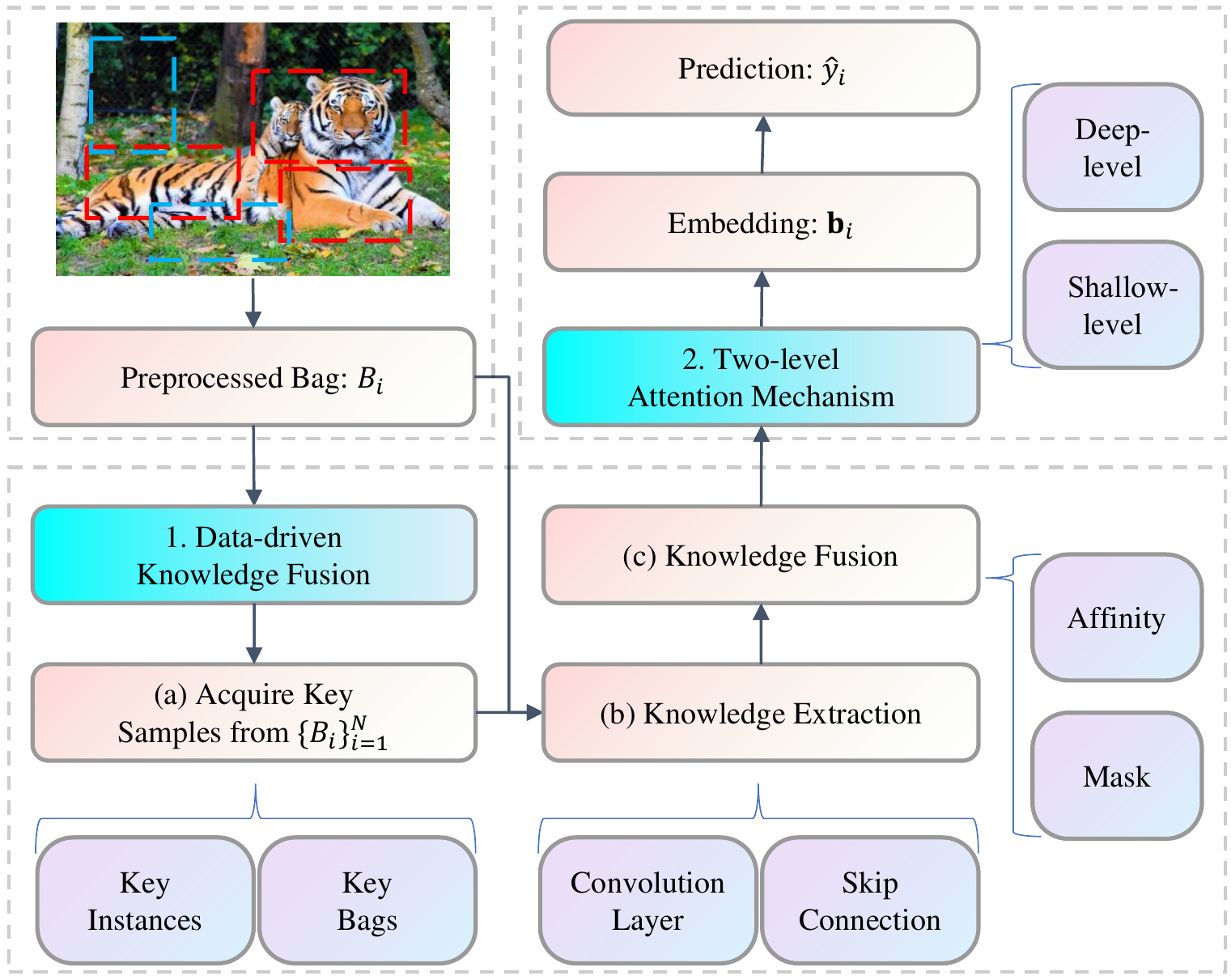}\label{fig: archi}}
    \caption{
    The semantic example and model architecture of DKMIL.
    All the images used in this example are from the open-source image library Pexel.
    For the sake of demonstration and description, we consider each image as a bag, with instances inside the bag being represented as patches.
    In practice, a bag may be a more complex scenario, such as the collection of various images, which will be further explained in the methodology and experiments.
    }
    \label{fig: example_archi}
\end{figure*}

These methods can be broadly categorized into traditional and deep learning-based methods due to their modeling strategies.
For traditional ones, the core is to utilize well-established machine learning classifiers such as SVM and $k$NN to assist in prediction, or use some mechanism to map bags into a new feature space to enable these classifiers to work \cite{Wei:2017:975987,Yang:2021:54565467}.
On the other hand, MIL deep learning approaches are essentially a simulation of the traditional methods, aiming to predict bag labels using the potent feature extraction capabilities of deep learning \cite{Ilse:2018:21272136,Konstantinov:2022:123}.
These methods are excellent and represent the exploration process and achievements of MIL.
However, a considerable amount of valuable knowledge is trapped inside the algorithm, leading to subsequent MIL algorithms that rely solely on the model's data fitting to predict unlabeled samples.
This represents a significant waste of knowledge and a hindrance to developing more intelligent models.
Therefore, we adopt a completely different idea from conventional MIL neural networks in that we introduce a distinct paradigm that pivots on a data-driven perspective, focusing on uncovering and exploiting the pre-existing knowledge stored within the algorithms.
This rich resource can be harnessed to extract valuable knowledge that can boost the model's learning ability.

Specifically, in this paper, we propose a data-driven knowledge fusion for deep multi-instance learning (DKMIL) algorithm.
Firstly, as depicted in Fig. \ref{fig: example}, for the provided data set, which includes multiple bags and instances, we can identify certain key samples, such as key instances of trigger bag labels and key bags at specific spatial locations, based on existing research such as \cite{Zhang:2009:4768,Wei:2017:975987,Wu:2018:10651080,Zhang:2020:16821689,Yang:2021:54565467}.
One of the notable contributions of these methods is to demonstrate the significance of these samples in training the model, showing that some of them may have a direct impact on the ultimate classification results.
Therefore, we devise a data-driven knowledge fusion module that serves as an interface between data and samples, enabling the extraction of valuable knowledge from key samples and leveraging the affinity matrix and mask to eliminate redundant information to learn the fusion bag, which is subsequently utilized for model training.
The advantage of this is that we take advantage of the established fact that key samples play a vital role in classification to aid in model training, and the flexibility of the model is greatly extended by the knowledge fusion module. The resulting scalability of the model allows the integration of any prior knowledge of existing methods without introducing excessive redundant information.
Additionally, in order to better utilize the knowledge-enriched features extracted from the data-driven knowledge fusion module and adapt them to the downstream modules of the model, we propose a two-level attention module that gradually learns both shallow- and deep-level features of the samples, resulting in a more effective classification process.
The complete architecture of DKMIL is shown in Fig. \ref{fig: archi}.

The main contributions of our work are as follows:
\begin{enumerate}
  \item
  We designed an algorithm that distinguishes itself from most previous MIL deep learning approaches by allowing for the fusion of prior knowledge from any method to aid in the model's learning process.
  The strong scalability of the model generated by this process has been demonstrated.
  \item
  We devised a data-driven knowledge fusion module that functions as an additional interface between the data and the model, enabling the model to learn fusion bags by using the knowledge of key samples.
  To make the downstream module of the model adaptable to fusion bags containing richer information, we design a two-level attention module that gradually learns shallow- and deep-level features of the samples, ultimately leading to more effective classification.
  \item
  We validated our DKMIL algorithm through extensive experiments on 38 data sets, covering three big scenarios.
  The results show that DKMIL exhibits the best overall classification performance while having the smallest parameter scale, which demonstrates the feasibility of improving the model by extracting prior knowledge from existing methods.
\end{enumerate}

\section{Related Works}

MIL was originally developed for predicting drug activity \cite{Dietterich:1997:3171}, which involves determining whether a bag of molecules (instances) contains those that can be used to make drugs.
The difficulty of this problem is that bag-level labels are provided, whereas instance-level labels are either unavailable or nonexistent.
This makes MIL a typical weakly supervised problem since the scarcity of labeled bags compared to the abundance of unlabeled instances.

Obviously, not all of these data hold equal importance for the classification process, just as not all molecules can be used to make drugs.
From a MIL research standpoint, certain key samples can be seen as intrinsic features of the data set, such as bags occupying vital positions in the data space and instances that play a crucial role in determining the label of the bag.
For example, BAMIC \cite{Zhang:2009:4768} and miVLAD \cite{Wei:2017:975987} approach the MIL problem from different angles by selecting key samples from the cluster centers of the bag space and instance space, respectively, using these samples to simplify the problem through the creation of mapping functions.
MILIS \cite{Fu:2011:958977} offers a distinctive solution that leverages an iterative optimization technique to pinpoint and remove instances that either do not affect the bag label or have a minimal impact on it.
Conversely, \cite{Tian:2021:49754986} uses a top-$k$ strategy to pick a certain number of instances that possess a higher likelihood of affecting the bag label.
MILDM \cite{Wu:2018:10651080} and ELDB \cite{Yang:2021:54565467} take into account the distribution of label space when selecting key samples, aimed at improving the interpretability and accuracy of the model's classification results.

Simultaneously, \cite{Ramon:2000:5360,Zhou:2002:455459,Zhang:2004:110} performed initial exploration to showcase the effectiveness of applying neural networks to MIL.
"Since then, numerous outstanding deep MIL algorithms have been proposed.
MI-Net \cite{Wang:2018:1524} employs deep supervision and residual connections to establish two efficient and scalable network structures.
ABMIL \cite{Ilse:2018:21272136} proposes two attention strategies for solving the MIL problem by representing the bag label as a fully neural network parameterized Bernoulli distribution.
LAMIL \cite{Shi:2020:57425745} introduces a loss function based on the attention mechanism that leverages the consistency cost to improve the generalization capability of the model.
DSMIL \cite{Li:2021:1431814328} presents a novel MIL aggregator to capture the instance relationships within a bag and uses self-supervised contrastive learning to obtain strong representations of the bags.
MAMIL \cite{Konstantinov:2022:123} considers the neighboring instances of each instance in a bag to handle various types of instances and produce a diverse feature representation for the bag.
HMIL \cite{Gao:2023:113} achieves better results with smaller data sets and improves generalization by exploiting the correlation among instances over different hierarchies.

The above methods offer many valuable lessons, such as the acquisition of key samples and the application of attention mechanisms.
Naturally, we aim to go beyond simply using using them as small modules in our methods or as comparison algorithms in experiments.
In reality, our goal is to extract the valuable knowledge contained in \cite{Zhang:2009:4768,Rodriguez:2014:14921496,Wei:2017:975987,Yang:2022:339351}, with the intention of aiding the training of the model.
As a result, the data-driven knowledge fusion and the two-level attention mechanism are designed to mimic this process.
More details about these techniques will be provided in the next section.

\section{Methodology}

This section outlines the problem setting and presents two main components of our DKMIL approach, namely the data-driven knowledge fusion and two-level attention mechanism.
We will then extend the model to make it more suitable for practical applications.
Finally, we prove the scalability of the data-driven knowledge fusion module based on the extended model.

\subsection{Problem Setting}\label{sec: problem_setting}

The primary target of this paper is to extract and fuse the bag-level and instance-level knowledge into our algorithm through a synthesis of the judgment made by existing MIL methods regarding key samples.
This is inspired by using knowledge graph reasoning from available data \cite{Hogan:2021:137} and abductive learning with logical grounding to assist inference \cite{Zhou:2019:13}.
The more important factor is that these successful MIL methods \cite{Zhang:2009:4768,Wu:2018:10651080} learned the relationship between the bags and key samples in order to achieve high classification performance in experiments across a variety of data domains.
Therefore, it is necessary to take into consideration the existence and potential influence of key samples, and then extract knowledge from these samples to support algorithm learning.

Let $\mathcal{D} = \{ B_i \}_{i=1}^N$ be the given data set, where $B_i = \{ \mathbf{x}_{ij} \}_{j=1}^{n_i}$ is a bag with the label $y_i$, $\mathbf{x}_{ij} \in \mathbb{R}^d$ is the $j$-th instance of $B_i$, $d$ is the dimension, $N$ and $n_i \in \mathbb{N}_1$ are the size of the data set and the bag, respectively.
By collecting all instances from $B_i$, the instance space is represented as $\mathcal{X} = \bigcup_i B_i$.
Note that the instances in $\mathcal{X}$ are renumbered to $\mathbf{x}_{l}$, where $l \in (1,L]$ and $L = \sum_i n_i$.
The learning process of DKMIL can be seperated into the following two parts, as shown in Fig. \ref{fig: archi}:

First, the key instance set $\mathcal{I} = \{ \mathbf{x}_k^{ins} \}_{k = 1}^{N_1}$ and key bag set $\mathcal{B} = \{ B_k^{bag} \}_{k = 1}^{N_2}$ are respectively generated from $\mathcal{X}$ and $\mathcal{D}$, and the fusion bag $B_i^{fuse}$ is obtained with the extraction and fusion blocks:
\begin{equation}\label{eq: step1.1}
B_i^{*}, \mathcal{I}^{*}, \mathcal{B}^{*} = KnowledgeExtraction(B_i, \mathcal{I}, \mathcal{B}),
\end{equation}
\begin{equation}\label{eq: step1.2}
B_i^{fuse} = KnowledgeFusion(B_i^{*}, \mathcal{I}^{*}, \mathcal{B}^{*}),
\end{equation}
where $\mathbf{x}_k^{ins} \in \mathcal{X}, B_k^{bag} \in \mathcal{D}$, $N_1 = | \mathcal{I} |$, and $N_2 = | \mathcal{B} |$.

Second, the fused representation will be used to obtain the prediction $\hat{y}_i$ via the two-level attention mechanism:
\begin{equation}\label{eq: step2}
\hat{y}_i = Classifier(Attention(B_i^{fuse})).
\end{equation}
In the following chapters, we will delve deeper into the implementation of Eqs. \eqref{eq: step1.1}--\eqref{eq: step2}.

\subsection{Data-driven Knowledge Fusion}

Data-driven knowledge fusion first performs an analysis of the data set using some data evaluation criteria from existing MIL methods, and identifies key instances and bags.
Based on these, the knowledge extraction block uses the convolution with skip connection to extract valuable information.
The knowledge fusion block then integrates these representations using a mask affinity matrix, generating a comprehensive bag representation that will serve as input for attention blocks.

\subsubsection{Key Sample Acquisition}\label{sec: al_key_sample}

The significance of the key sample acquisition module, as the first sub-module of the data-driven knowledge fusion module, cannot be overstated.
Certainly, it is a given that methods for obtaining key samples already exist.
However, the main objective of this subsection is not to simply list these methods but to summarize a series of methods that can provide important metrics for the subsequent module design.
Specifically, as the fundamental component of their algorithm architecture, key sample-based MIL approaches consider the existence of some key instances or bags in the data set, which can have a direct impact on the classification result.
For example, BAMIC \cite{Zhang:2009:4768} and miVLAD \cite{Wei:2017:975987} believe that the cluster centers can serve as key samples.
MILDM \cite{Wu:2018:10651080} and ELDB \cite{Yang:2021:54565467} hold that the distinguishability between intermediate representations should be taken into account when choosing key ones.
Based on these, we summarize the following two evaluators to choose key instances and key bags, respectively.
For instance $\mathbf{x}_l \in X \subseteq \mathcal{X}$, its centrality \cite{Zhang:2009:4768,Wei:2017:975987,Yang:2022:339351} is defined as:
\begin{equation}\label{eq: centrality}
s_{l}^{cen} = \frac{| X |}{\sum_{t = 1, \xi_t \neq l}^{|X|} \| \mathbf{x}_l - \mathbf{x}_{\xi_t} \|_2},
\end{equation}
where $\xi_t \in [1,L]$.
The density \cite{Rodriguez:2014:14921496} of $\mathbf{x}_l$ is defined as:
\begin{equation}\label{eq: density}
s_l^{den} = \rho_l \times \delta_l,
\end{equation}
where
$$
\rho_l = \sum_{t = 1, \xi_t \neq l}^{| X |}e^{-\left( \frac{\| \mathbf{x}_l - \mathbf{x}_{\xi_t} \| }{0.5 \times \tau} \right)^2},
$$
and
$$
\delta_l=
\left\{
    \begin{array}{ll}
        \tau,                                                                 & \rho_l = \max_{t} \rho_{\xi_t};\\
        \min_{\rho_{\xi_t} > \rho_l} \| \mathbf{x}_l - \mathbf{x}_{\xi_t} \|, & \operatorname{otherwise},\\
    \end{array}
\right.
$$
where $\tau = \max_{t} \| \mathbf{x}_l - \mathbf{x}_{\xi_t} \|$.
The two formulas' central ideas diverge.
According to the Eq. \eqref{eq: centrality}, instance $\mathbf{x}_l$ is more likely to be identified as a key instance if the reciprocal of the average distance between it and other instances is larger.
Equation \eqref{eq: density} contends that there should be a small similarity between key instances and, as a result, multiplies an adaptive distance $\delta_l$ to rectify the RBF kernel's output $\rho_l$.
Algorithm \ref{ag: key_instance_random} illustrates a random sampling technique that utilizes these two evaluators to obtain key instances $\mathcal{I}$ from the data set $\mathcal{D}$.
A visual depiction of this algorithm on the $\mathcal{D}$ can be seen in Fig. \ref{fig: ins_sampling}.
The results indicate that the random sampling technique does not miss the most key instances evaluated with Eqs. \eqref{eq: centrality} and \eqref{eq: density} (a mathematically proof is then given).
Additionally, some other instances might be included to enhance the diversity of sampling results.

\begin{algorithm}[!t]
\caption{Key instance acquisition}\label{ag: key_instance_random}
 \KwData{$\ $\\
     $\qquad$Data set $\mathcal{D}$;\\
     $\qquad$Number of random sampling $m_n$;\\
     $\qquad$Sampling space $X$'s size $m_s \leq L$;\\
     $\qquad$Number of key instances acquired from the\\
     $\qquad$current sampling space $m_k < m_s$;
 }
 \KwResult{$\ $\\
    $\qquad$Key instance set $\mathcal{I}$;
 }
 Initialize $\mathcal{I} = \emptyset$ and generate $\mathcal{X}$ based on $\mathcal{D}$\;
 \For{$i \in [1, m_n]$}
 {
    Generate the sampling space $X \subseteq \mathcal{X}$, where $| X | = m_s$\;
    Compute the centrality $s_{\xi_t}^{cen}$ for each instance $\mathbf{x}_{\xi_t} \in X$ via Eq. \eqref{eq: centrality}\;
    Compute the density $s_{\xi_t}^{den}$ for $\mathbf{x}_{\xi_t}$ via Eq. \eqref{eq: density}\;
    Get $m_k$ instance with maximum centrality and add them to $\mathcal{I}$\;
    Get $m_k$ instance with maximum density and add them to $\mathcal{I}$\;
 }
 \Return $\mathcal{I}$\;
\end{algorithm}

\begin{theorem}\label{the: random}
    Let $\mathcal{X}$ be the instance space, $m_n$ be the number of random sampling, $m_s$ be the sample space's size, $m_k$ be the number of key instances acquired, and $L$ is be size of $\mathcal{X}$.
    Assume $m_n$ is large enough, we can conclude that the random sampling technique is capable of selecting the most key instances.
\end{theorem}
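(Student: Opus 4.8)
The plan is to treat the statement probabilistically: since each round of Algorithm~\ref{ag: key_instance_random} draws a sampling space $X \subseteq \mathcal{X}$ at random, ``capable of selecting the most key instances'' should be read as ``the probability of eventually selecting each most key instance tends to $1$ as $m_n \to \infty$.'' First I would fix the notion of a \emph{most key} instance: let $\mathbf{x}^{*}$ be an instance that maximizes the global evaluator score (centrality in Eq.~\eqref{eq: centrality} or density in Eq.~\eqref{eq: density}) computed over the whole space $\mathcal{X}$. The goal then reduces to showing that, with enough rounds, $\mathbf{x}^{*}$ is added to $\mathcal{I}$ with probability approaching certainty, and then extending to the finite collection of such instances by a union bound.

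The core of the argument is a single-round favorable-event bound. I would show there exists at least one size-$m_s$ subset $X^{*} \ni \mathbf{x}^{*}$ in which $\mathbf{x}^{*}$ ranks among the top $m_k$ by the local evaluator, so that drawing $X^{*}$ causes $\mathbf{x}^{*}$ to be selected. The existence follows from the distance-based form of the evaluators: concentrating the remaining $m_s - 1$ members of the subset near $\mathbf{x}^{*}$ drives its within-subset average distance down (hence its local centrality/density up), making it locally top-ranked. Since a uniformly random subset equals any fixed admissible $X^{*}$ with strictly positive probability, the probability $p^{*}$ that a single round selects $\mathbf{x}^{*}$ satisfies $p^{*} > 0$.

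Then I would use independence of the $m_n$ rounds: the probability that $\mathbf{x}^{*}$ is missed in every round is $(1 - p^{*})^{m_n}$, which tends to $0$ as $m_n \to \infty$. A union bound over the finitely many most key instances gives that the probability of missing any of them is bounded by $\sum_i (1 - p_i^{*})^{m_n} \to 0$, so for $m_n$ large enough all most key instances are selected with probability arbitrarily close to $1$, which is the claim. The incidental inclusion of additional, non-extremal instances in each round is exactly the diversity benefit noted after the theorem.

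I expect the main obstacle to be rigorously establishing $p^{*} > 0$ --- that is, guaranteeing that a globally most key instance is actually locally top-$m_k$ in at least one subset. This is delicate because both evaluators are computed relative to the current $X$, so a globally central instance need not dominate inside an arbitrary subset (a tight cluster placed far from $\mathbf{x}^{*}$ can out-score it locally). I would resolve this by an explicit construction of one favorable $X^{*}$ rather than an averaging argument, invoking the monotonicity of the distance sums in Eqs.~\eqref{eq: centrality}--\eqref{eq: density} to certify the local ranking; the remaining steps of independence and the geometric-tail limit are then routine.
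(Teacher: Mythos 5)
Your overall architecture is a genuinely different, and more carefully posed, route than the paper's. You formalize ``capable of selecting the most key instances'' as an almost-sure-recovery statement, reduce it to a per-round favorable-event probability $p^{*}>0$, and finish with independence, the geometric tail $(1-p^{*})^{m_n}\to 0$, and a union bound over the finitely many global maximizers. The paper instead assumes all instances have equal density, takes the per-round selection probability to be $m_k/L$, observes that $\lim_{m_n\to\infty}1-(1-m_k/L)^{m_n}=1$ and that the expected number of selections is $m_k m_n/L$, and then argues informally that when densities are unequal the high-scoring instances are selected even more often, with expected sampling counts between $0$ and $m_s m_n/L$. Your decomposition is preferable in that it isolates exactly the one claim that needs proof; the paper's version buys brevity at the cost of asserting, rather than proving, that global keyness translates into higher selection frequency.

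That isolated claim, however, is a genuine gap in your proposal, and the construction you sketch does not close it. The sampling space $X^{*}$ must be a subset of the actual finite instance space $\mathcal{X}$, so you cannot ``concentrate the remaining $m_s-1$ members near $\mathbf{x}^{*}$'' unless such instances happen to exist in the data; and even when they do, packing $X^{*}$ with near neighbours of $\mathbf{x}^{*}$ inflates their local centrality and density just as much as it inflates that of $\mathbf{x}^{*}$, so it does not certify that $\mathbf{x}^{*}$ lands in the local top $m_k$ under Eq.~\eqref{eq: centrality} or Eq.~\eqref{eq: density} (a neighbour at the barycentre of the chosen cluster can strictly dominate it). Since both evaluators are computed relative to the drawn $X$, global optimality does not imply local top-$m_k$ membership in any particular subset with $m_s<L$, and nothing in Algorithm~\ref{ag: key_instance_random} ever evaluates the global scores; so the existence of even one favorable subset, hence $p^{*}>0$, remains unestablished. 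To be fair, the paper's own proof founders on exactly the same point --- it simply asserts that high-density instances are ``much more likely'' to be selected --- so your proposal does not miss an idea that the paper supplies; rather, both arguments need either an added hypothesis linking global and local rankings (e.g., that $\mathbf{x}^{*}$ remains top-$m_k$ within the subset of its $m_s-1$ nearest neighbours) or a weakened conclusion.
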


\begin{proof}\label{proof: random}
    Take the density calculation in Eq. \eqref{eq: density} as an example and assume that the densities of all instances are the same.
    For any given instance from $\mathcal{X}$, the probability that it is selected in each sample is $m_k / L$.
    Additionally, in multiple sampling, the probability that each instance is sampled at least once is $= \lim_{m_n \to \infty} 1 - (1 - m_k / L)^{m_n} = 1$.
    Then the average number of samples is $m_k m_n / L$.
    Instance densities, however, are often inconsistent, which means that the probability of a high-density instance being selected is much greater than that of a low-density instance.
    As a result, the upper and lower bounds of the average sampling times are $m_s m_n / L$ and $0$, respectively.
    Based on this, we can conclude that the random sampling technique is capable of selecting the most key instances.
\end{proof}

\begin{figure}[!htb]
\centering
\includegraphics[width=0.85\linewidth]{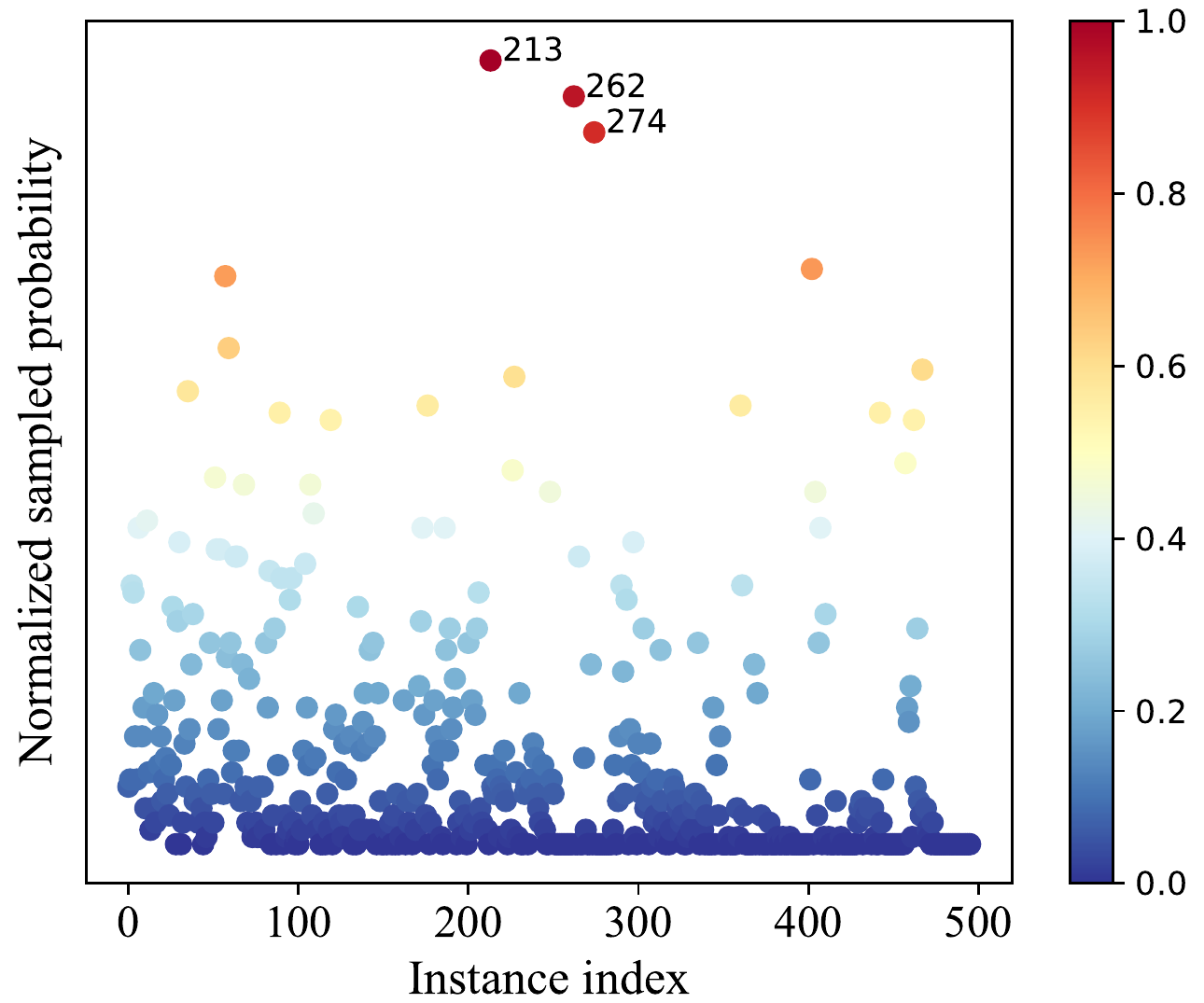}
\caption{
Schematic view of key instance acquisition using random sampling on the musk1 data set \cite{Dietterich:1997:3171} under $m_n = 1000$.
The abscissa denotes the index of all instances in the data set, and the ordinate indicates the sampling probability, which is normalized by dividing by the maximum of all probability values.
}
\label{fig: ins_sampling}
\end{figure}

For bag-level analysis, the calculation of centrality $S_i^{cen}$ and density $S_i^{den}$ for bag $B_i$ is almost the same as that for instance-level ones.
The only difference is the replacement of the $l_2$-norm between instances (i.e. $\| \cdot \|_2$) with a bag-to-bag distance metric since the sampling space consists bags.
Note that $S_i$ is used to distinguish between bag- and instance-level calculations.
Recent studies have summarized over ten distance metrics, among which the MSK metric with linear time complexity is proposed \cite{Yang:2022:339351}.
MSK has the highest overall classification performance, although not particularly outstanding performance on domain-specific data sets.
Therefore, it is also utilized as the distance metric between bags in our approach, which may provide better scalability and adaptability.

\subsubsection{Knowledge Extraction Block}

The primary objective of the knowledge extraction module is to extract valuable information from the input bag $B_i$, key instance set $\mathcal{I}$, and key bag set $\mathcal{B}$ for subsequent learning.
We have opted to utilize convolutional neural networks and skip connections as part of this module for the following rationales.
Convolution neural networks \cite{Albawi:2017:16,Ilg:2017:24622470} are widely recognized for their ability to learn input-output relationships based on labeled data.
And the skip connection \cite{Tong:2017:47994807,Wang:2022:24412449} allows for the construction of short paths from the output to the input, which mitigates the vanishing-gradient problem encountered in deep networks and retains the characteristics of the input data.
Because of this, we build the knowledge extraction block based on these two useful components, and its entire design is illustrated in Fig. \ref{fig: skip-connect}.

For the given input $B_i$, three convolution layers are firstly used to extract the latent knowledge of the input $B_i$, i.e., $B_i^{conv1} = Conv1 (B_i^T)$, $B_i^{conv2} = Conv2 (B_i^T)$, and $B_i^{conv3} = Conv3 (B_i^T)$, where kernel size and stride are both set to $1$.
Because there are varying numbers of instances in the various bags, the input $B_i$ needs to be transposed in this case.
So that the convolutional layer's input channel number can be easily set to $d$ for all bags.
The relationship between instance pairs is then evaluated by fusing $B_i^{conv1}$ and $B_i^{conv}$ to get $B_i^{rela} = Softmax((B_i^{conv1})^T \times B_i^{conv2})$, where $\times$ stands for matrix multiplication and $B_i^{rela} \in \mathbb{R}^{n_i \times n_i}$.
The reason for this is that the bags in the data set often contain redundant instances, and special techniques are necessary to filter them out.
One such technique is the subspace fuzzy clustering method used in FCBE-miFV \cite{Waqas:2022:119113}.
Therefore, we use $B_i^{rela} \times (B_i^{conv3})^T$ to simulate this process, as if the relationship value $b_{jk}^{rela} \in B_i^{rela}$ between two instances $\mathbf{x}_{ij}$ and $\mathbf{x}_{ik}$ is low, the values of the corresponding learned features will tend to be zero.
Finally, the knowledge derived from $B_i$ is computed as $B_i^* = B_i \oplus B_i^{rela} \times (B_i^{conv3})^T$ based on the main concept of skip connection.
Similarly, $\mathcal{I}^*$ and $\mathcal{B}^*$ can be computed as:
\begin{equation}
\mathcal{I}^* = SkipConnection(\mathcal{I}),
\end{equation}
\begin{equation}
\mathcal{B}^* = \{ SkipConnection(B_k^{bag}) \}_{k = 1}^{N_2}.
\end{equation}

\begin{figure}[!htb]
\centering
\includegraphics[width=0.75\linewidth]{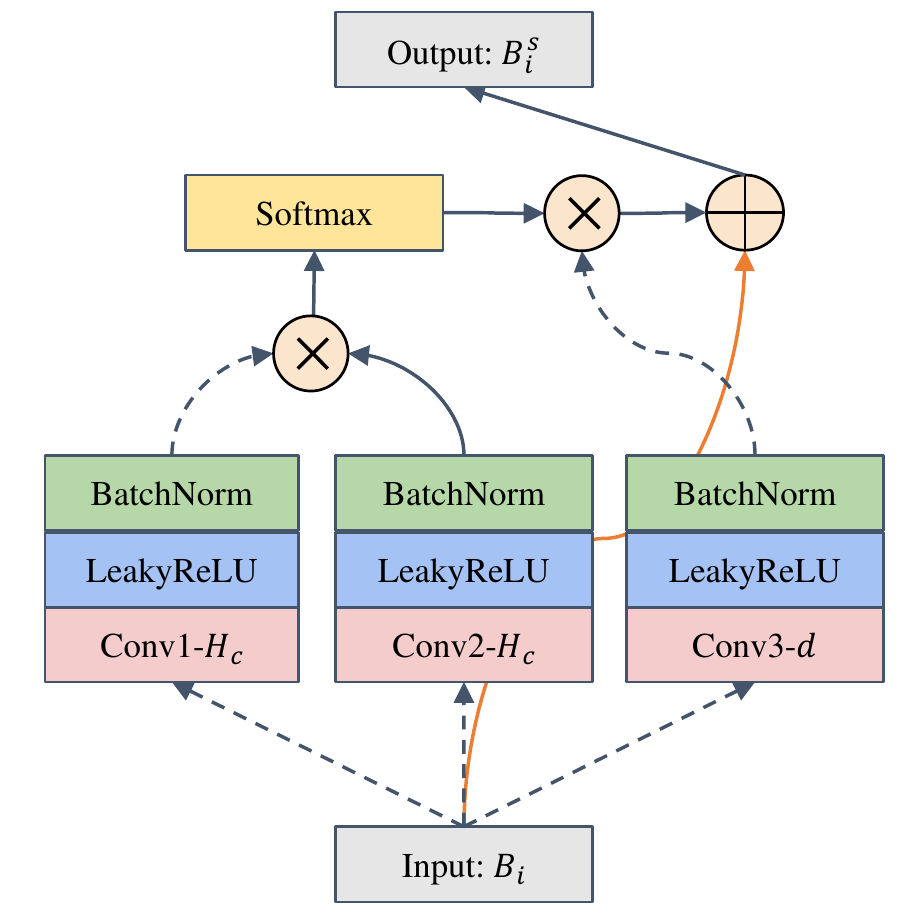}
\caption{
The architecture of knowledge extraction block.
$H_c$ and $d$ represent both the number of output channels in the convolutional layer, and $d$ also denotes the dimension of instances in $B_i$.
Dashed and solid lines with arrows signify the need for transposed input and direct input, respectively.
The orange solid line with an arrow represents a short path in the skip connection mechanism.
The circles with ``$\times$'' and ``$\oplus$'' denote matrix multiplication and element-wise addition, respectively.
}
\label{fig: skip-connect}
\end{figure}

\subsubsection{Knowledge Fusion Block}

The knowledge fusion module constitutes a critical component of our algorithm, as it serves as the direct interface between the key samples and the model.
It not only facilitates the integration of key sample knowledge into the bag but also eliminates the potentially disruptive information that may hinder the model's fitness.
The presence of redundant information is attributed to the random sampling method used in key sample acquisition, which results in the inclusion of non-key samples.
Additionally, there may be mutual exclusivity between different key sample evaluation metrics, as the determination of whether a sample is a key sample or not may yield opposite results under different metrics.
Our proposed solution is to model the correlation between the bag and the key instance set as well as the key bag set by using affinity matrixes.
We then employ a mask to eliminate certain features and obtain the fusion bag.

Specifically, we have now obtained three abstract knowledge $B_i^*, \mathcal{I}^*$, and $\mathcal{B}^*$, which contain valuable information, such as instances that trigger bag labels \cite{Yang:2022:339351} and bags that can build embedding functions \cite{Zhang:2009:4768}.
One crucial tool for utilizing this knowledge is the affinity matrix \cite{Zhou:2009:12491256}, where each element indicates the degree of association between two key samples.
Specifically, the affinity matrix $\mathcal{A}_i^\mathcal{I}$ between a given bag $B_i = \{\mathbf{x}_{ij}\}_{j = 1}^{n_i}$ and $\mathcal{I}^*$ is defined as:
\begin{equation}
\mathcal{A}_i^\mathcal{I} =
\left[
\begin{array}{ccc}
\| \mathbf{x}_{i1}^* - \mathbf{x}_1^{*} \|_2 & \cdots & \| \mathbf{x}_{i1}^* - \mathbf{x}_{N_1}^{*} \|_2 \\
\vdots & \ddots & \vdots\\
\| \mathbf{x}_{in_i}^* - \mathbf{x}_1^{*} \|_2 & \cdots & \| \mathbf{x}_{in_i}^* - \mathbf{x}_{N_1}^{*} \|_2 \\
\end{array}
\right],
\end{equation}
where $\mathbf{x}_{ij}^* \in B^*$ and $\mathbf{x}_k^* \in \mathcal{I}^*$ are the deep-level representations of $\mathbf{x}_{ij} \in B_i$ and $\mathbf{x}_k^{ins} \in \mathcal{I}$, respectively.
Similarly, the affinity matrix $\mathcal{A}_i^\mathcal{B}$ between $B_i$ and $\mathcal{B}^*$ is calculated as:
\begin{equation}
\mathcal{A}_i^\mathcal{B} =
\left[
\begin{array}{ccc}
a_i^{11} & \cdots & a_i^{1N_2} \\
\vdots & \ddots & \vdots\\
a_i^{n_i1} & \cdots & a_i^{n_iN_2} \\
\end{array}
\right],
\end{equation}
where
$$
a_i^{jk} =
\left[
\| \mathbf{x}_{ij}^* - \mathbf{x}_{k1}^* \|_2, \dots, \| \mathbf{x}_{ij}^* - \mathbf{x}_{kn_k^*}^* \|_2
\right],
$$
where $\mathbf{x}_{k\cdot} \in SkipConnection(B_k^{bag})$ and $n_k^*$ is cardinality of $B_k^{bag}$.

The issue currently is that $\mathcal{A}_i^\mathcal{I}$ and $\mathcal{A}_i^\mathcal{B}$ still contain significant amounts of redundant information, and an effective strategy is required to extract knowledge from them and fuse the results with $B_i^*$.
Mask is a common operation in deep learning \cite{He:2017:29612969,Wei:2018:704714}, which involve adding a mask to the original input to block or select some specific elements.
In this work, we introduce a mask block that utilizes the fundamental principle of mask to generate two mask affinity matrices.
This block is constructed as follows:
\begin{align}
l_{i1}^{mask} &= LeakyReLU(\mathcal{A}_i \times W_1^{mask}),\\
l_{i2}^{mask} &= Tanh(\mathcal{A}_i \times W_2^{mask}),\\
l_{i3}^{mask} &= LeakyReLU((l_{i1}^{mask})^T \times l_{i2}^{mask} \times  W_3^{mask}),\\
l_{i4}^{mask} &= Softmax((l_{i3}^{mask})^T \times W_4^{mask}),
\end{align}
where $W_1^{mask},W_2^{mask} \in \mathbb{R}^{d \times H_m}$, $W_3 \in \mathbb{R}^{H_m \times d}$, and $W_4 \in \mathbb{R}^{H_m \times 1}$ are the weight parameters for this block.
$H_m$ is the number of nodes.
For convenience of description, we omit all bias parameters.
In particular, $l_{i4}^{mask}$ of size $1 \times d$ takes each element as the importance of the corresponding feature in the input data, with the sum of all elements being equal to $1$.
Algorithm \ref{ag: mask} outlines how to use the mask block to generate the mask affinity matrix $\mathcal{M}_i^\mathcal{I}$ and $\mathcal{M}_i^\mathcal{B}$.

Following are some more explanations to help understand this algorithm:
a) Skip connection is used to extract more useful features from the input affinity matrices;
b) $DescendingArgSort(\cdot)$ stands for the index in descending order of importance, which is used to filter out irrelevant information; and
c) The dimension of the mask affinity matrix based on $\mathcal{A}_i^\mathcal{B}$ is only related to $N_2$ due to considering the positive bag has at least one positive instance \cite{Dietterich:1997:3171} and the negative bag can choose an instance as its own representation \cite{Yang:2022:122133}.

Once we have $\mathcal{M}_i^\mathcal{I}$ and $\mathcal{M}_i^\mathcal{B}$, we can stack them with $B_i$ in the feature dimension:
\begin{equation}
B_i^{stack} = Stack(B_i, \mathcal{M}_i^\mathcal{I}, \mathcal{M}_i^\mathcal{B}),
\end{equation}
where $B_i^{stack} \in \mathbb{R}^{n_i \times (d + d^{mask} + N_2) }$.
Finally, the bag that fuses abstract knowledge extracted from key instances and key bags is expressed as:
\begin{equation}
B_i^{fuse} = LeakyReLU (B_i^{stack} \times W^{fuse}),
\end{equation}
where $W^{fuse} \in \mathbb{R}^{(d + d^{mask} + N_2) \times d}$.

\begin{algorithm}[!htb]
\caption{Generate the mask affinity matrices}\label{ag: mask}
 \KwData{$\ $\\
     $\qquad$Affinity matrices $\mathcal{A}_i^\mathcal{I}$ and $\mathcal{A}_i^\mathcal{B}$;\\
     $\qquad$Minimum dimensions $N_m$;\\
     $\qquad$Mask's ratio parameter $r$.
 }
 \KwResult{$\ $\\
    $\qquad$Mask affinity matrices $\mathcal{M}_i^\mathcal{I}$ and $\mathcal{M}_i^\mathcal{B}$;
 }
 Compute $l_{i4}^{mask}$ using $SkipConnection(\mathcal{A}_i^\mathcal{I})$\;
 $\varsigma_i$ = $DescendingArgsort(l_4^{mask})$\;
 $d^{mask} = \max (N_m, r \times N_1)$\;
 Update $\varsigma_i$ by selecting the first $d^{mask}$ indices in $\varsigma_i$\;
 Generate $\mathcal{M}_i^\mathcal{I}$ by selecting the columns corresponding to $\varsigma_i$ from $SkipConnection(\mathcal{A}_i^\mathcal{I})$\;
 Reset $\varsigma_i = \emptyset$\;
 \For{$k \in [1, N_2]$}
 {
    Compute $l_4^{mask}$ using the $k$-th column of $SkipConnection(\mathcal{A}_i^\mathcal{B})$\;
    $\varsigma_{ik} = \argmax l_{i4}^{mask}$\;
    $\varsigma_i \leftarrow \varsigma_i \cup \{ \varsigma_{ik} \}$\;
 }
 Generate $\mathcal{M}_i^\mathcal{B}$ by selecting the columns corresponding to $\varsigma_i$ from $SkipConnection(\mathcal{A}_i^\mathcal{B})$\;
 \Return $\mathcal{M}_i^\mathcal{I}$ and $\mathcal{M}_i^\mathcal{B}$\;
\end{algorithm}

\subsection{Two-level Attention Mechanism}

We developed a data-driven knowledge fusion technique to combine the abstract knowledge of the key instance set and the key bag set.
However, the challenge remains in constructing a deep learning classifier capable of handling the variable size $n_i$ of $B_i^{fuse}$ and obtaining the bag prediction label $\hat{y}_i$.
The attention mechanism \cite{Vaswani:2017:111}, mimicking cognitive attention in artificial neural networks, is often used to enhance some parts of the input data and reduce others.
In MIL, it helps determine the weight of instances and direct the learner's focus \cite{Ilse:2018:21272136}.
Based on this, we will construct a two-level attention block to improve classification.

\subsubsection{Shallow-level Attention Block}

The shallow-level attention block is utilized to extract features from $B_i^{fuse}$ and fuse them into an embedding vector, serving as an initial step for classification.
Its structure is as follows:
\begin{align}
l_{i1}^{low}    & = LeakyReLU(B_i^{fuse} \times W_1^{low}),\\
l_{i2}^{low}    & = TanH(l_{i1}^{low} W_2^{low}),\\
l_{i3}^{low}    & = LeakyReLU(l_{i1}^{low} \times W_3^{low}),\\
l_{i4}^{low}    & = LeakyReLU((l_{i2}^{low} \otimes l_{i3}^{low}) \times W_4^{low}),\\
\alpha_i^{low} & = Softmax((l_{i4}^{low})^T),\\
\mathbf{b}_i^{low}    & = LeakyReLU(\alpha^{low} \times l_{i1}^{low} \times W_5^{low}),
\end{align}
where $W_1^{low} \in \mathbb{R}^{d \times H_l}$, $W_2^{low}, W_3^{low} \in \mathbb{R}^{H_l \times D_l}$, $W_4^{low} \in \mathbb{R}^{D_l \times 1}$, $W_5^{low} \in \mathbb{R}^{H_l \times d}$, $\otimes$ represents the element-wise multiplication, and $H_l,D_l$ are the number of nodes.
$\alpha^{low} \in \mathbb{R}^{1 \times n_i}$ is called the attention value (a.k.a. instance weight), which weights the features extracted from $B_i^{fuse} \in \mathbb{R}^{n_i \times d}$ into the shallow-level embedding $\mathbf{b}_i^{low} \in \mathbb{R}^{1 \times d}$ for upcoming decision-making.

\subsubsection{Deep-level Attention Block}

The core requirement of the deep-level attention block is to use $B_i^{fuse}$ and $\mathbf{b}_i^{low}$ to complete the final prediction, which consists of the shallow-level attention block followed by a classification layer:
\begin{equation}
\hat{y}_i = Sigmoid(\mathbf{b}_i^{high} \times W^{high}),
\end{equation}
where $W^{cla} \in \mathbb{R}^{H^h \times 1}$, $H^h$ is the number of nodes, and $\mathbf{b}_i^{high}$ is computed by feeding
\begin{equation}
B_i^{stack} = \{ Stack(\mathbf{b}_i^{low}, \mathbf{x}_{ij}^{fuse}) \},
\end{equation}
into the shallow-level attention block, where $\mathbf{x}_{ij}^{fuse}$ is the deep-level representation of $\mathbf{x}_{ij}$ in $B_i^{fuse}$.
The information contained in $\mathbf{b}_i^{low}$, which is derived from the abstraction of knowledge from $B_i$, $\mathcal{I}$, and $\mathcal{B}$, motivates us to stack it together with $B_i^{fuse}$ in the attention block. By doing so, we aim to enable the block to learn more informative attention values, which will be experimentally validated.

\subsection{Model Extension}

The aforementioned algorithm is designed and described for bags in which the instance is represented by a vector.
It is evident that it cannot handle more complex MIL applications, such as determining if an image bag retrieved from a search engine contains the images that we are looking for.
To demonstrate how to extend DKMIL to handle such cases, we provide the example of the image application.
In this case, a data-transformation block is required to map $B_i^{image} = \{ I_i \}_{i = 1}^{n_i}$ to the space of $B_i$, where $I_i$ represents an image:
\begin{align}
l_{i1}^{map} & = MaxPool2d(LeakyReLU(Conv2d\text{-}H_d(I_i))),\\
l_{i2}^{map} & = MaxPool2d(LeakyReLU(Conv2d\text{-}D_d(I_i))),\\
l_{i3}^{map} & = Reshape(l_{i2}^{map}),
\end{align}
where $MaxPool2d$ and $Conv2d$ are respectively the max pooling and convolution blocks used for the image, $H_d$ and $D_d$ are the number of output channels.
Here $Reshape$ means change the shape of $l_{i2}^{map}$ to $(n_i, h \times w)$, where $h$ and $w$ are the height and width of $l_{i2}^{map}$.

However, this presents a new challenge:
the criteria developed in Section \ref{sec: al_key_sample} for choosing key samples are no longer applicable.
First, it is possible to calculate the similarity between two images using the Euclidean distance, but important information such as the correlation between the upper and lower rows may be lost.
Second, there is no existing metric to calculate the similarity between two bags of images, so within the scope of our knowledge, searching for key bags is pointless.
Therefore, in this more complex scenario where images are treated as instances, we will only focus on finding key images and utilize MS-SSIM \cite{Wang:2003:13981402} as a measure of their similarity, i.e., the $\|\cdot\|_2$ in Eqs. \eqref{eq: centrality} and \eqref{eq: density} will be replaced by the MS-SSIM value of the two images.
Moreover, the dimensions of some data sets, such as the web data set \cite{Zhou:2005:135147}, are much larger than the number of bags, which may result in processing irrelevant information and prolong the runtime of the algorithm.
To tackle this issue, w adopt a simple solution by introducing a fully connected layer to reduce the dimensionality of $B_i$ before entering the network:
\begin{equation}
B_i \leftarrow LeakyReLU(B_i \times W^{dim}),
\end{equation}
where $W^{dim} \in \mathbb{R}^{d \times N}$.

\subsection{Discussion}

Our core innovation lies in the data-driven knowledge fusion module, which represents a novel approach to leveraging the rich prior knowledge in MIL.
By designing an effective interface between data and models, we are able to extract useful knowledge from key samples, thereby enhancing the machine learning capabilities.
Obviously, such a module is highly scalable.

\begin{theorem}\label{the: scalable}
    Let $\mathcal{K} (B_i^{fuse} | B_i, \mathcal{D})$ be the data-driven knowledge fusion module and $\mathcal{D}$ be a data set, where $B_i \in \mathcal{D}$ is a bag, we can conclude that $\mathcal{K} (\cdot)$ is a strongly scalable module.
\end{theorem}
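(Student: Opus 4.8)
The plan is to first make the informal notion of \emph{strong scalability} precise enough to admit a proof, and then to verify it by tracing tensor shapes through the module. I would declare $\mathcal{K}$ strongly scalable if two conditions hold: (i) \emph{method-agnostic well-definedness} --- for key sets $\mathcal{I}$ and $\mathcal{B}$ of \emph{any} finite cardinalities $N_1, N_2$ produced by \emph{any} prior-knowledge selection procedure (not only the centrality/density evaluators of Eqs.~\eqref{eq: centrality}--\eqref{eq: density}), the map $(B_i, \mathcal{I}, \mathcal{B}) \mapsto B_i^{fuse}$ is a total function; and (ii) \emph{interface invariance} --- the output always satisfies $B_i^{fuse} \in \mathbb{R}^{n_i \times d}$, so its feature dimension is independent of $N_1$, $N_2$, and of the selection method. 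Condition (ii) is what actually delivers ``scalability,'' since it guarantees that injecting extra prior knowledge never forces a change to the downstream two-level attention module, whose first projection $W_1^{low} \in \mathbb{R}^{d \times H_l}$ only needs the feature width $d$ to match.

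Next I would establish (i) and (ii) by a stage-by-stage shape analysis. The knowledge-extraction block is shape-preserving because the skip connection $B_i^* = B_i \oplus B_i^{rela} \times (B_i^{conv3})^T$ returns an $n_i \times d$ tensor and maps each key instance and key bag back to feature dimension $d$; hence $\mathcal{I}^*$ and $\mathcal{B}^*$ are defined for arbitrary $N_1, N_2$. The affinity matrices are assembled entrywise from pairwise $\ell_2$ distances, so $\mathcal{A}_i^\mathcal{I}$ and $\mathcal{A}_i^\mathcal{B}$ exist for every choice of key sets. The mask block then projects these to controlled widths: the instance side is reduced to $d^{mask} = \max(N_m, r \times N_1)$ columns via the importance scores $l_{i4}^{mask}$ together with the descending-order selection, while the bag side is reduced to exactly $N_2$ columns. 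Stacking yields $B_i^{stack} \in \mathbb{R}^{n_i \times (d + d^{mask} + N_2)}$, and the fusion projection with $W^{fuse} \in \mathbb{R}^{(d + d^{mask} + N_2) \times d}$ returns it to $\mathbb{R}^{n_i \times d}$. Thus the output shape is $n_i \times d$ regardless of $N_1, N_2$, which gives (ii), while each stage is a total function of its inputs, which gives (i).

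I would close with a composition argument that captures the ``strong'' part. Given two prior-knowledge sources yielding $(\mathcal{I}_1, \mathcal{B}_1)$ and $(\mathcal{I}_2, \mathcal{B}_2)$, their combination amounts to feeding the augmented key sets $\mathcal{I}_1 \cup \mathcal{I}_2$ and $\mathcal{B}_1 \cup \mathcal{B}_2$; by (i) the module remains well-defined, and by (ii) the output is still $n_i \times d$. Only the row-dimensions of the mask and fusion weights rescale with the new $N_1, N_2$, whereas the interface presented to the attention blocks is fixed. Hence arbitrarily many sources of prior knowledge may be fused without architectural modification, which is precisely the claimed strong scalability.

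\emph{Main obstacle.} The delicate point is not the shape bookkeeping but fixing a definition under which the statement is simultaneously nontrivial and provable. The crux is arguing that the mask block genuinely \emph{decouples} the output dimension from $N_1$: the reduction to $d^{mask} = \max(N_m, r \times N_1)$ columns, combined with the $DescendingArgSort$ selection, must be shown to discard the redundant and mutually exclusive key samples introduced by random sampling (Algorithm~\ref{ag: key_instance_random}) without inflating or destabilizing the interface. Strengthening ``scalable'' beyond a pure shape statement --- for instance, bounding how the \emph{retained information}, rather than merely the dimension, degrades gracefully as knowledge sources accumulate --- is where any genuine difficulty would reside.
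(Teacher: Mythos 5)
Your argument is sound to the extent that such an informal claim admits proof, but it takes a genuinely different route from the paper's. The paper argues scalability by counting the space of admissible knowledge configurations: since $\mathcal{I} \subset \mathcal{X}$ and $\mathcal{B} \subset \mathcal{D}$ can each be any nonempty proper subset, there are $2^L - 2$ and $2^N - 2$ possible key sets; invoking Theorem~\ref{the: random} to concede that only a fraction of these are actually reachable, it then appeals to the multiplicity of selection criteria --- the centrality of Eq.~\eqref{eq: centrality}, the density of Eq.~\eqref{eq: density}, and in principle knowledge drawn from abductive learning or graph structure --- to rewrite $\mathcal{K}(B_i^{fuse} \mid B_i, \mathcal{D})$ as $\mathcal{K}(B_i^{fuse} \mid B_i, \mathcal{D}, \mathcal{S})$, and it is the richness of the family $\mathcal{S}$ that the paper calls strong scalability. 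You instead fix an operational definition (totality over arbitrary $N_1, N_2$ and any selection procedure, plus invariance of the output interface $\mathbb{R}^{n_i \times d}$) and verify it by tracing shapes through the extraction, affinity, mask, and fusion stages. The two arguments are complementary: the paper exhibits the diversity of knowledge one could plug in but never checks that the module dimensionally accommodates all of it, which is exactly what your stage-by-stage analysis supplies; conversely, you quantify over ``any prior-knowledge selection procedure'' without exhibiting the concrete variety of procedures that the paper treats as the substance of the claim. Your caveat that $W^{fuse}$ and the mask weights still rescale with $N_1$ and $N_2$ --- so the invariance is of the downstream interface rather than of the whole parameterization --- is an honest limitation that the paper's counting argument does not address at all.
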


\begin{proof}\label{proof: scalable}
In accordance with the current framework of $\mathcal{K} (\cdot)$, we can derive the key instance set $\mathcal{I} \subset \mathcal{X} = \bigcup B_i$ from $\mathcal{D}$ based on Algorithm \ref{ag: key_instance_random}.
Correspondingly, we can acquire the key bag set $\mathcal{B} \subset \mathcal{D}$ in a similar manner.
Then we have
\begin{equation}\label{eq: scalable1}
    \mathcal{K} (B_i^{fuse} | B_i, \mathcal{D}) = \mathcal{K} (B_i^{fuse} | B_i, \mathcal{I}, \mathcal{B}).
\end{equation}

Clearly, the number of possible cases of $\mathcal{I}$ is
$$
    \sum_{l \in [1, L)} C_L^l = 2^L - 2,
$$
where $L = | \mathcal{X} | = \sum_i n_i$ and $n_i $ is the size of $B_i$.
Accordingly, the number of possible cases of $\mathcal{B}$ is $2^N - 2$, where $N = | \mathcal{D} |$.
In other words, by employing various distinctive strategies for obtaining key samples, such as the centrality of Eq. \eqref{eq: centrality} or the density of Eq. \eqref{eq: density}, we can generate close to $2^L + 2^N$ number of feasible alternatives.
Therefore, Eq. \eqref{eq: scalable1} can be rewritten as
\begin{equation}\label{eq: scalable2}
    \mathcal{K} (B_i^{fuse} | B_i, \mathcal{D}) = \mathcal{K} (B_i^{fuse} | B_i, \mathcal{D}, S),
\end{equation}
where $S = \{ s_l^{cen}, s_l^{den}, S_i^{cen}, S_i^{den} \}$, and $s_l^{cen}$/$S_i^{cen}$ and $s_l^{den}$/$S_i^{den}$ are the centrality and density of $\mathbf{x}_l$/$B_i$, where $\mathbf{x}_l \in \mathcal{X}$.

As per Theorem \ref{the: random}, the probability of many instances or bags being selected is essentially zero, which implies that the number of viable options at our disposal will be significantly less than $2^L + 2^N$.
Nevertheless, there still exist numerous distinct methods $\mathcal{S}$ for selecting the key samples, which, in conjunction with Algorithm
\ref{ag: key_instance_random}, contribute to the scalability of $\mathcal{K} (\cdot)$.
This scalability is manifested through the diversity of samples $\mathcal{I}$ and $\mathcal{B}$ and methods $S \in \mathcal{S}$ to obtain key samples.

In addition, our current approach is data-driven, so the prior knowledge utilized to obtain the fusion bag $B_i^{fuse}$ comprises only $\mathcal{I}$ and $\mathcal{B}$.
However, a vast array of knowledge can be summarized from existing algorithms, such as abductive learning, which incorporates logical reasoning, and graph neural networks with the introduction of graph structures.
Both of these approaches can potentially yield knowledge for obtaining $B_i^{fuse}$ under different knowledge fusion modules.
Therefore, in the end we have
\begin{equation}\label{eq: scalable2}
    \mathcal{K} (B_i^{fuse} | B_i, \mathcal{D}) = \mathcal{K} (B_i^{fuse} | B_i, \mathcal{D}, \mathcal{S}),
\end{equation}
Consequently, we can say that $\mathcal{S}$ ensures the strong scalability of $\mathcal{K} (\cdot)$.
\end{proof}

\section{Experiments}

In this section, we will verify our algorithm through six experiments, including an ablation study and performance, convergence, statistical significance, and vulnerability comparisons.
They will be used to demonstrate the importance of data-driven knowledge fusion and two-level attention, as well as the effectiveness, security, and others of the algorithm.
Prior to that, we will detail the comparison algorithms and their parameters, the data sets that were utilized, and the evaluation metrics in the parameter setups.

\begin{figure*}[!htb]
    \centering
    \subfloat[Musk1]{\includegraphics[width=0.33\hsize]{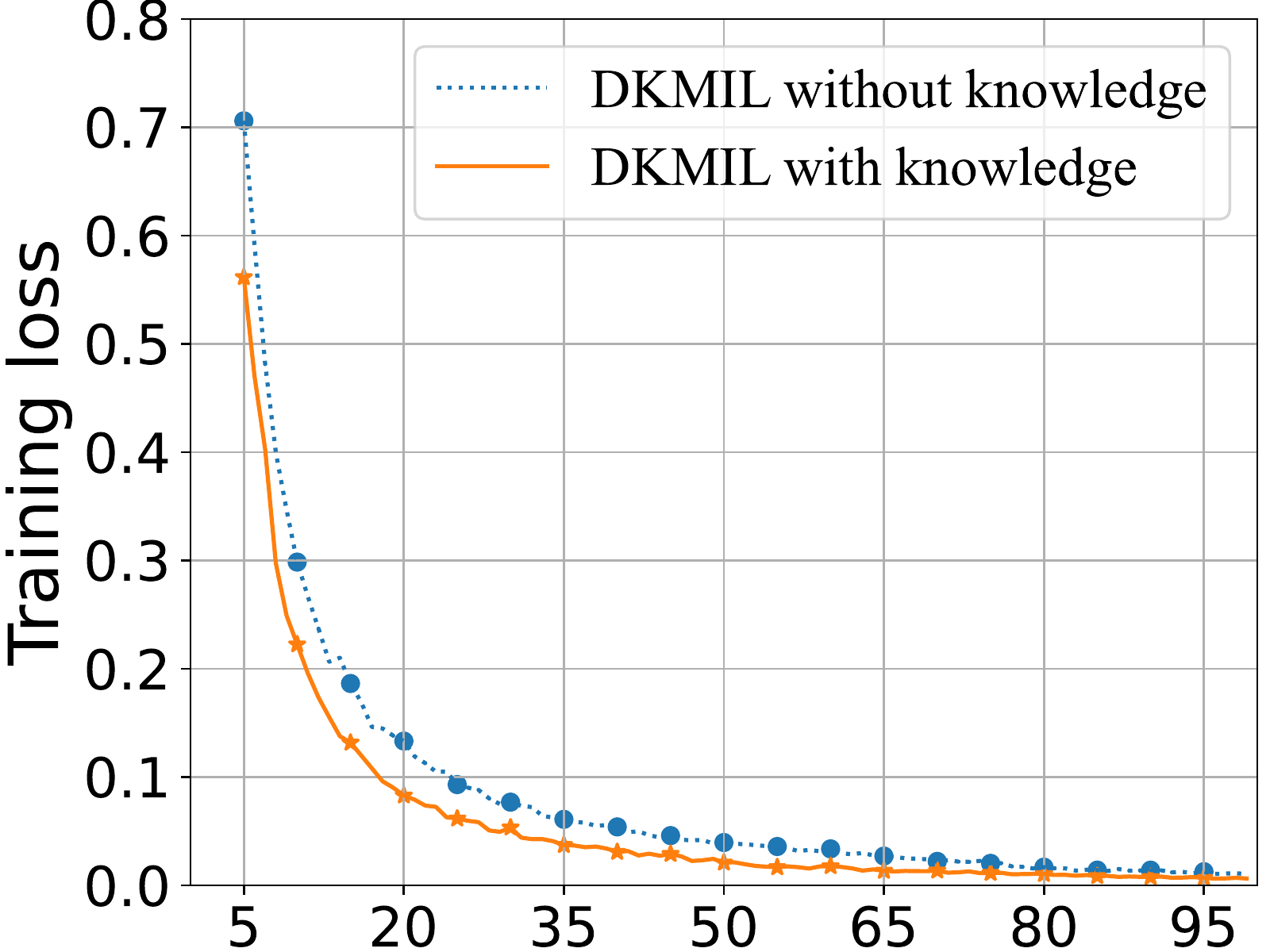}\label{fig: ablation_musk1}}
    \subfloat[Musk2]{\includegraphics[width=0.33\hsize]{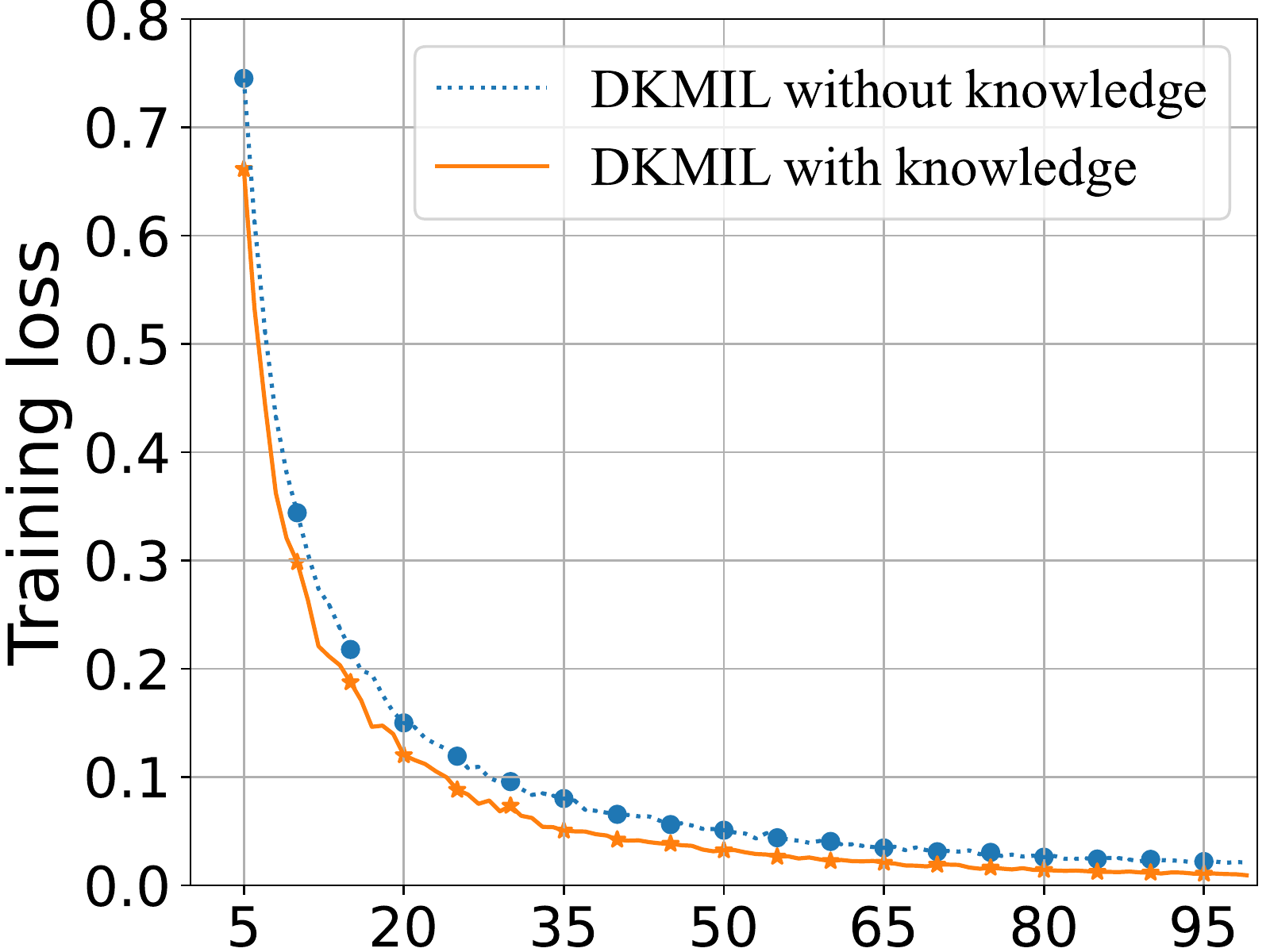}\label{fig: ablation_musk2}}
    \subfloat[Musk1 and Musk2]{\includegraphics[width=0.33\hsize]{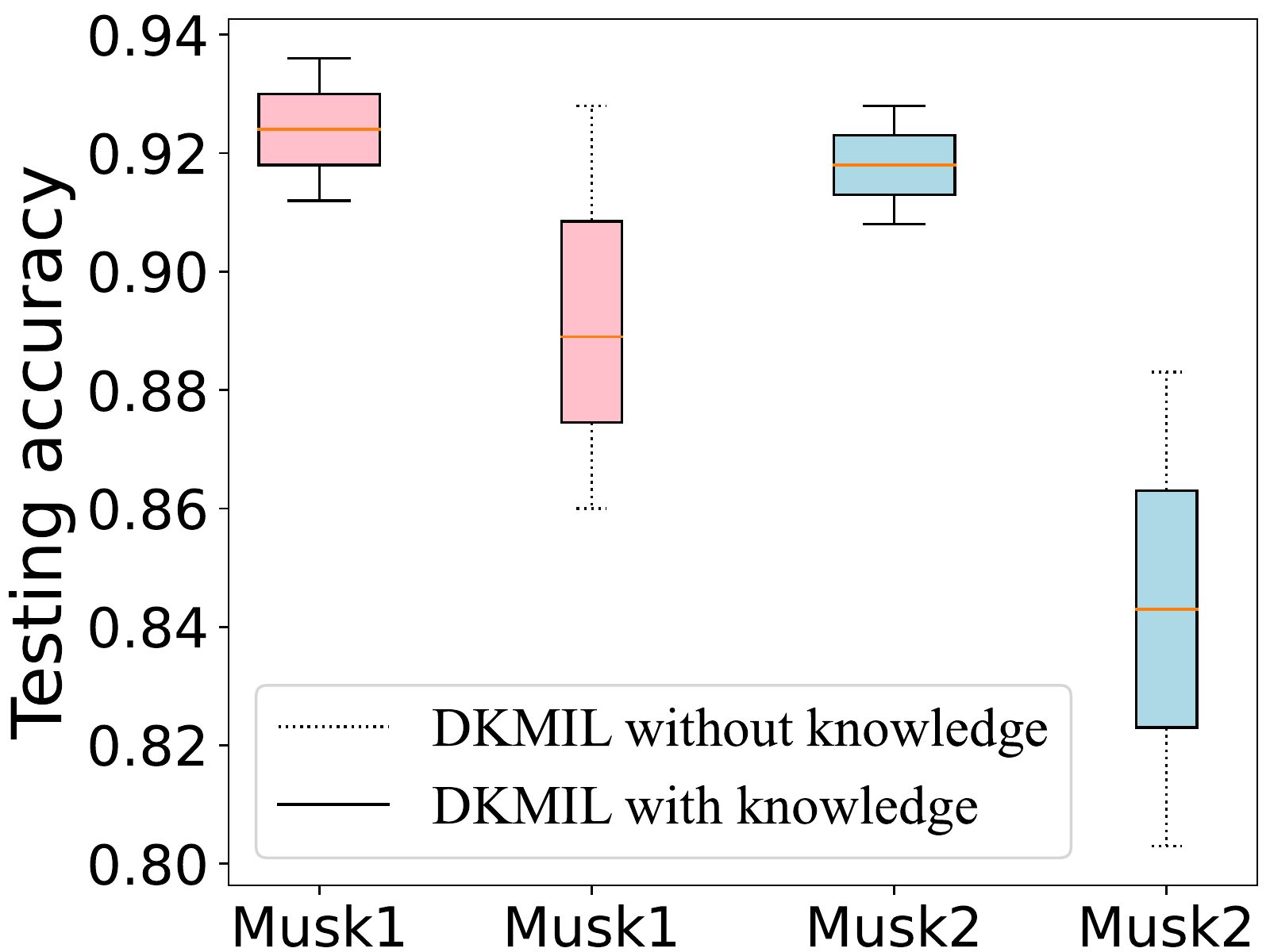}\label{fig: ablation_musk_acc}}
    \caption{
    Ablation study for data-driven knowledge-fusion.
    The data sets used are musk1 and musk2 in the field of drug activity prediction.
    The abscissas of (a) and (b) represent training epochs, while the abscissa of (c) identifies the data set.
    Dashed and solid lines are used to represent the cases without and with knowledge, respectively.
    The center line of the rectangular box in (c) represents the average test accuracy, while the T- and inverted-T-shaped lines indicate the highest and lowest accuracy, respectively.
    }
    \label{fig: ablation_musk}
\end{figure*}

\subsection{Parameter Setups}

\begin{figure}[!htb]
    \centering
    \subfloat[Musk1]{\includegraphics[width=0.99\hsize]{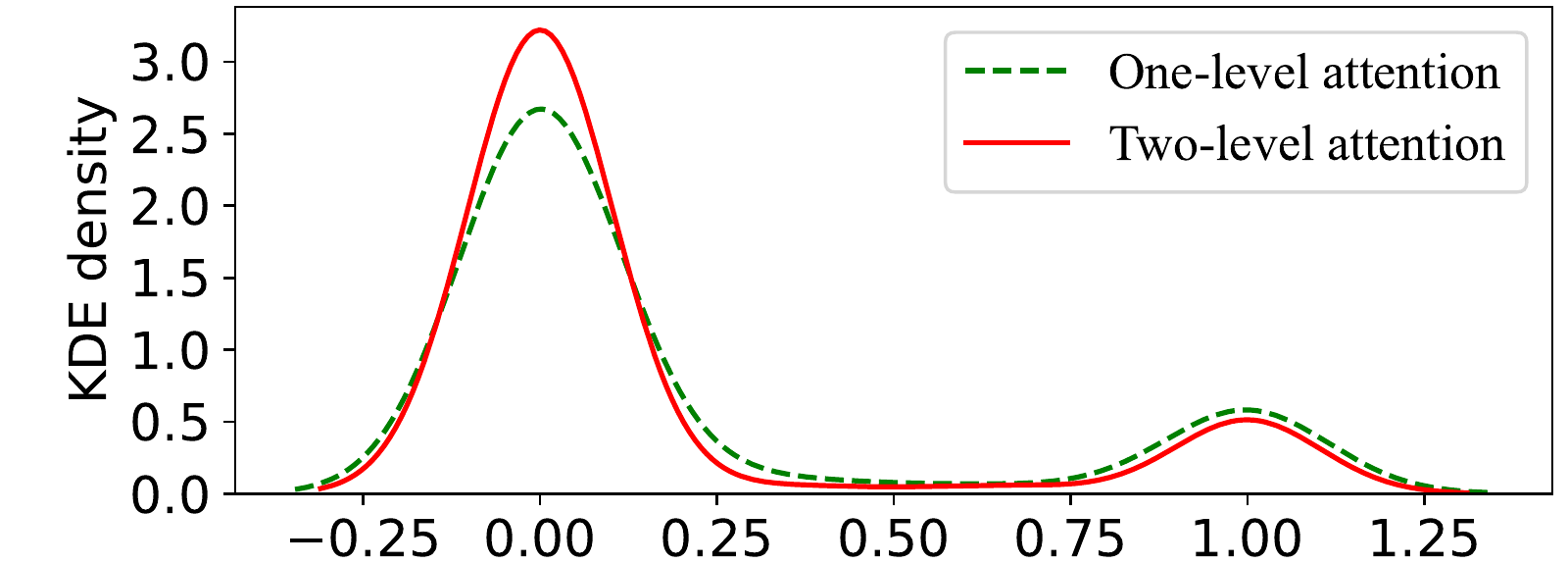}\label{fig: ablation_att_musk1}}

    %\hspace{0.01\hsize}
    \subfloat[Musk2]{\includegraphics[width=0.99\hsize]{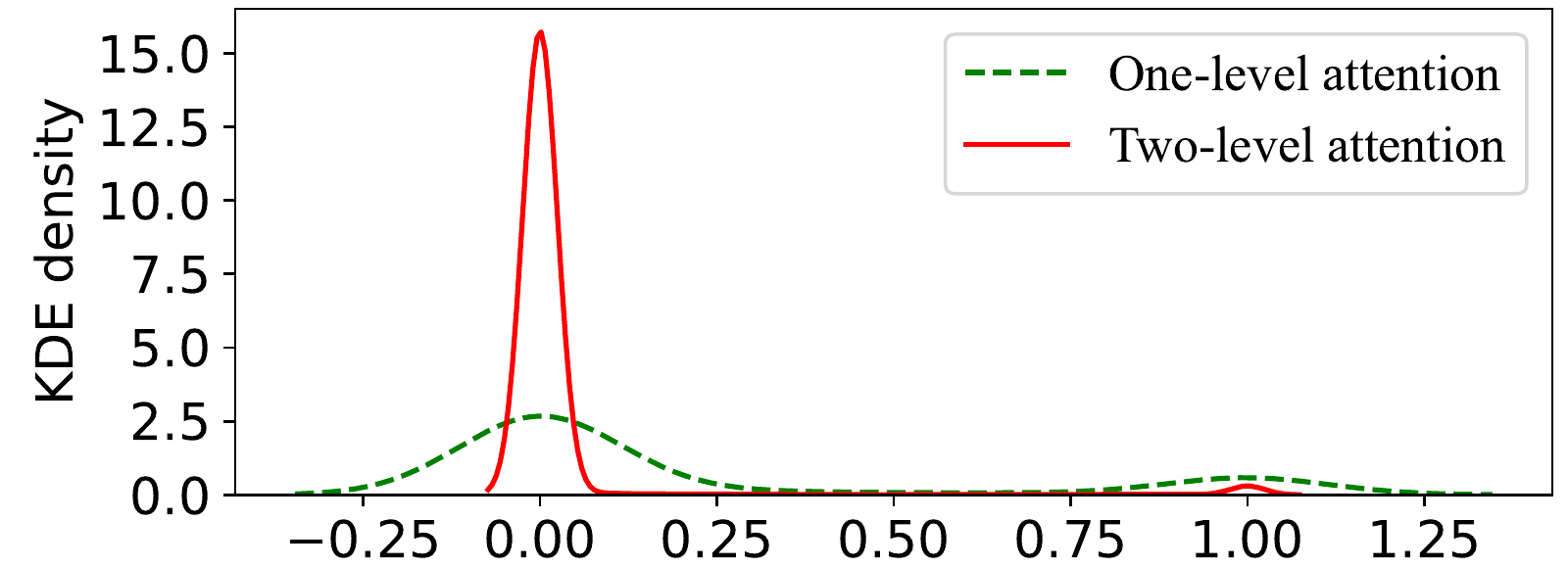}\label{fig: ablation_att_musk2}}
    \caption{
    Ablation study for two-level attention.
    The data sets used are musk1 and musk2 in the field of drug activity prediction.
    Dashed and solid lines represent the cases with one-level and two-level attention, respectively.
    }
    \label{fig: ablation_attention}
\end{figure}

The experiment compares nine algorithms, including four traditional algorithms (miVLAD \cite{Wei:2017:975987}, miFV \cite{Wei:2016:155198}, ELDB \cite{Yang:2021:54565467}, and MSK \cite{Yang:2022:339351}), and five neural network methods (ABMIL \cite{Ilse:2018:21272136}, GAMIL \cite{Ilse:2018:21272136}, LAMIL \cite{Shi:2020:57425745}, DSMIL \cite{Li:2021:1431814328}, and MAMIL \cite{Konstantinov:2022:123}).
These algorithms were selected due to their representation of the latest advancements in MIL or their strong theoretical basis.
Some crucial parameters that require analysis refer to setups of \cite{Yang:2022:109121,Zhang:2023:111}.
The only difference is that we have set the epoch of DSMIL and MAMIL to $100$.
For our designed DKMIL,
the number of random sampling $m_n = 10$,
the size of sampling space
$$
m_s =\left\{
\begin{array}{ll}
10              & N \leq 100;\\
0.1 \times N    & 100 \leq N \leq 500;\\
50              & otherwise,
\end{array}
\right.
$$
the number of key instances and key bags $m_k = 3$,
the minimum dimensions $N_m = 10$,
the mask's ratio parameter $r = 0.1$,
the number of nodes $H_m = 256$ (for mask block),
$H_l = 128$ and $D_l = 64$ (for shallow-level attention),
$H_h = 64$ (for high=level attention)
$H_c = 16$ (for skip connection),
learning rate was set to $5e\text{-}5$.
For the employed optimizer Adam, the weight decay was set to $5e\text{-}5$.
Before each activation function in mask and attention blocks, we add a dropout layer (drop out rate was set to $0.1$) to prevent overfitting.
Additionally, we added $\ell_1$ regularization to the model parameters.
For more details, please see the source code at https://github.com/InkiInki/DKMIL.

We selected five types of MIL data sets, including drug activity prediction, image classification, web recommendation, text classification, video anomaly detection (VAD), and medical diagnosis, to validate DKMIL.
The motivation for this decision will next be explained.

First and foremost, traditional MIL data sets should be taken into account since they represent the origin, development, and exploration of this field.
Therefore, two drug activity (musk1 and musk2) \cite{Dietterich:1997:3171}, three image (elephant, tiger, and fox) \cite{Andrews:2002:561568,Li:2008:9851002}, six web \cite{Zhou:2005:135147}, and $20$ text (news groups) \cite{Zhou:2009:12491256} data sets were used.
Because of how unbalanced their classes are in comparison to other data sets, web1, web2, and web3 were not used in the settings.

Secondly, the algorithm should be able to handle bags with more complex structures, such as those where each image correlates to an instance, in order to meet real-world demands.
As a result of suggestions of \cite{Ilse:2018:21272136,Zhang:2023:111}, $30$ synthetic image data sets based on the three image databases MNIST \cite{Lecun:1998:mnist}, CIFAR10 \cite{Krizhevsky:2009:learning}, and STL10 \cite{Coates:2011:215223} were created.

Furthermore, the structure of MIL data set is well-suited for tasks such as video anomaly detection and medical diagnosis, making it a valuable approach for real-world applications.
For example, videos and 3D magnetic resonance imaging (MRI) scans can be naturally viewed as bags where each instances corresponds to video frames or 2D MRI images.
In this regard, one video anomaly detection (VAD) data set ShanghaiTech \cite{Liu:2018:63566545,Zhong:2019:12371246} was employed to validate the proposed algorithms.
To improve the efficiency of leaning video features, we used the setting of \cite{Tian:2021:49754986}, divided each video into multiple clips and treated each clip as a bag, using I3D \cite{Kay:2017:122} for preprocessing.
We also expanded the application of DKMIL in the area of medical diagnosis using the brain tumor data set \cite{Cheng:2015:0140381}.
Three different patient data types are included in this data set, namely meningioma, glioma, and pituitary.
Its goal is to identify the tumor type based on the patient's 3D magnetic resonance imaging (MRI) scan.

For each data set, the average accuracy and standard deviation (the value with ``$\pm$'') of $5$ times $5$-fold cross validation ($5$CV) were reported.
For MNIST, CIFAR10, and STl10, which already include training and test sets, the average accuracy was determined based on the results of five independent experiments.
Note that accuracy is the only evaluation metric used because the dat aset is class-balanced.

\begin{table*}[!htb]
\caption{
Performance comparison of DKMIL with traditional and neural network-based methods on the benchmark data sets.
Text in bold font indicates the highest classification accuracy for each data set for each row.
``Average'' displays the mean and standard deviation of the algorithm's classification performance across all the listed data sets.
}
\label{table: performance_benchmark}
\centering
\resizebox{\textwidth}{!}{
\begin{tabular}{ccccccccccccc}
\toprule
Data set                      & miVLAD                         & miFV                           & ELDB                           & MSK                            & ABMIL
                              & GAMIL                          & LAMIL                          & DSMIL                          & MAMIL                          & \pmb{DKMIL}\\
\midrule
Musk1                         & $0.847\pm0.011$                & $0.920\pm0.008$                & $0.902\pm0.016$                & $0.860\pm0.013$                & $0.884\pm0.022$
                              & $0.900\pm0.050$                & $0.890\pm0.020$                & $0.913\pm0.021$                & $0.827\pm0.067$                & \pmb{$0.924\pm0.012$}\\
Musk2                         & $0.780\pm0.054$                & $0.890\pm0.020$                & $0.857\pm0.039$                & $0.806\pm0.021$                & $0.822\pm0.017$
                              & $0.863\pm0.042$                & $0.848\pm0.019$                & $0.905\pm0.022$                & $0.814\pm0.061$                & \pmb{$0.918\pm0.010$}\\
Elephant                      & $0.856\pm0.011$                & $0.852\pm0.013$                & $0.843\pm0.012$                & $0.746\pm0.010$                & $0.848\pm0.014$
                              & $0.868\pm0.022$                & $0.872\pm0.005$                & \pmb{$0.898\pm0.010$}          & $0.890\pm0.018$                & $0.897\pm0.009$\\
Tiger                         & $0.843\pm0.008$                & $0.789\pm0.006$                & $0.767\pm0.013$                & $0.734\pm0.016$                & $0.810\pm0.031$
                              & $0.845\pm0.018$                & $0.819\pm0.011$                & $0.851\pm0.015$                & $0.849\pm0.016$                & \pmb{$0.865\pm0.010$}\\
Fox                           & $0.611\pm0.020$                & $0.639\pm0.011$                & $0.648\pm0.014$                & $0.540\pm0.016$                & $0.606\pm0.060$
                              & $0.635\pm0.013$                & $0.561\pm0.022$                & $0.653\pm0.021$                & $0.632\pm0.008$                & \pmb{$0.664\pm0.019$}\\
Web4                          & $0.816\pm0.015$                & $0.807\pm0.812$                & $0.775\pm0.014$                & $0.782\pm0.009$                & $0.844\pm0.027$
                              & $0.845\pm0.035$                & $0.785\pm0.009$                & $0.896\pm0.012$                & $0.845\pm0.011$                & \pmb{$0.923\pm0.005$}\\
Web5                          & $0.821\pm0.015$                & $0.782\pm0.061$                & $0.791\pm0.006$                & $0.775\pm0.010$                & $0.822\pm0.015$
                              & $0.815\pm0.017$                & $0.776\pm0.011$                & $0.858\pm0.050$                & $0.842\pm0.019$                & \pmb{$0.934\pm0.005$}\\
Web6                          & $0.833\pm0.017$                & $0.778\pm0.005$                & $0.778\pm0.008$                & $0.778\pm0.010$                & $0.811\pm0.020$
                              & $0.805\pm0.019$                & $0.782\pm0.005$                & $0.884\pm0.032$                & $0.778\pm0.005$                & \pmb{$0.936\pm0.000$}\\
Web7                          & $0.731\pm0.015$                & $0.687\pm0.030$                & $0.476\pm0.024$                & $0.416\pm0.055$                & $0.713\pm0.021$
                              & $0.698\pm0.025$                & $0.485\pm0.031$                & $0.733\pm0.030$                & $0.531\pm0.065$                & \pmb{$0.811\pm0.009$}\\
Web8                          & $0.746\pm0.019$                & $0.706\pm0.021$                & $0.474\pm0.065$                & $0.518\pm0.030$                & $0.713\pm0.012$
                              & $0.695\pm0.020$                & $0.466\pm0.050$                & $0.753\pm0.023$                & $0.565\pm0.078$                & \pmb{$0.836\pm0.007$}\\
Web9                          & $0.758\pm0.017$                & $0.753\pm0.022$                & $0.420\pm0.035$                & $0.455\pm0.051$                & $0.724\pm0.039$
                              & $0.713\pm0.033$                & $0.503\pm0.021$                & $0.785\pm0.021$                & $0.527\pm0.088$                & \pmb{$0.823\pm0.026$}\\
News.aa                       & $0.836\pm0.027$                & $0.834\pm0.016$                & $0.849\pm0.007$                & $0.854\pm0.005$                & $0.862\pm0.019$
                              & $0.810\pm0.032$                & \pmb{$0.874\pm0.016$}          & $0.872\pm0.016$                & $0.856\pm0.033$                & $0.845\pm0.035$\\
News.cg                       & $0.790\pm0.014$                & $0.802\pm0.008$                & $0.806\pm0.010$                & \pmb{$0.820\pm0.000$}          & $0.609\pm0.015$
                              & $0.610\pm0.017$                & $0.644\pm0.033$                & $0.690\pm0.029$                & $0.706\pm0.042$                & $0.780\pm0.020$\\
News.com                      & $0.702\pm0.042$                & $0.688\pm0.027$                & $0.725\pm0.025$                & $0.738\pm0.008$                & $0.700\pm0.013$
                              & $0.576\pm0.051$                & $0.502\pm0.045$                & $0.682\pm0.018$                & $0.694\pm0.062$                & \pmb{$0.758\pm0.042$}\\
News.csi                      & $0.798\pm0.013$                & $0.637\pm0.021$                & $0.784\pm0.010$                & $0.782\pm0.004$                & $0.744\pm0.029$
                              & $0.662\pm0.013$                & $0.742\pm0.040$                & $0.714\pm0.033$                & $0.730\pm0.020$                & \pmb{$0.803\pm0.037$}\\
News.csm                      & $0.798\pm0.008$                & $0.724\pm0.021$                & $0.817\pm0.028$                & $0.794\pm0.011$                & $0.764\pm0.021$
                              & $0.690\pm0.030$                & $0.800\pm0.018$                & $0.746\pm0.009$                & $0.708\pm0.015$                & \pmb{$0.825\pm0.033$}\\
News.cwx                      & $0.806\pm0.054$                & $0.758\pm0.013$                & $0.789\pm0.018$                & $0.734\pm0.009$                & $0.642\pm0.010$
                              & $0.662\pm0.013$                & $0.588\pm0.033$                & \pmb{$0.832\pm0.019$}          & $0.802\pm0.013$                & $0.810\pm0.008$\\
News.mf                       & $0.716\pm0.029$                & $0.736\pm0.016$                & $0.685\pm0.021$                & $0.700\pm0.010$                & $0.666\pm0.022$
                              & $0.468\pm0.050$                & $0.716\pm0.032$                & $0.730\pm0.024$                & $0.692\pm0.036$                & \pmb{$0.745\pm0.035$}\\
News.ra                       & $0.822\pm0.026$                & $0.718\pm0.025$                & $0.772\pm0.010$                & $0.774\pm0.011$                & $0.706\pm0.008$
                              & $0.698\pm0.023$                & $0.768\pm0.034$                & $0.774\pm0.029$                & $0.748\pm0.022$                & \pmb{$0.825\pm0.010$}\\
News.rm                       & $0.812\pm0.016$                & \pmb{$0.877\pm0.020$}          & $0.798\pm0.017$                & $0.828\pm0.008$                & $0.854\pm0.021$
                              & $0.740\pm0.054$                & $0.871\pm0.026$                & $0.850\pm0.020$                & $0.846\pm0.029$                & $0.768\pm0.032$\\
News.rsb                      & $0.838\pm0.011$                & $0.745\pm0.014$                & $0.834\pm0.012$                & $0.834\pm0.011$                & $0.826\pm0.010$
                              & $0.798\pm0.013$                & \pmb{$0.898\pm0.021$}          & $0.860\pm0.014$                & $0.866\pm0.018$                & $0.870\pm0.014$\\
News.rsh                      & $0.894\pm0.010$                & $0.884\pm0.010$                & $0.834\pm0.016$                & $0.864\pm0.009$                & $0.872\pm0.005$
                              & $0.858\pm0.040$                & $0.920\pm0.021$                & $0.874\pm0.021$                & \pmb{$0.936\pm0.030$}          & $0.935\pm0.019$\\
News.sc                       & $0.818\pm0.023$                & $0.750\pm0.018$                & $0.770\pm0.012$                & $0.764\pm0.009$                & $0.780\pm0.014$
                              & $0.790\pm0.024$                & $0.802\pm0.036$                & \pmb{$0.866\pm0.027$}          & $0.854\pm0.015$                & $0.805\pm0.039$\\
News.se                       & $0.922\pm0.012$                & $0.926\pm0.005$                & \pmb{$0.940\pm0.007$}          & \pmb{$0.940\pm0.000$}          & $0.554\pm0.010$
                              & $0.574\pm0.013$                & $0.572\pm0.036$                & $0.634\pm0.063$                & $0.676\pm0.030$                & $0.903\pm0.038$\\
News.sm                       & $0.804\pm0.023$                & $0.777\pm0.026$                & $0.826\pm0.007$                & $0.836\pm0.005$                & $0.822\pm0.016$
                              & $0.760\pm0.037$                & $0.720\pm0.022$                & $0.868\pm0.019$                & $0.856\pm0.015$                & \pmb{$0.870\pm0.018$}\\
News.src                      & $0.794\pm0.011$                & $0.721\pm0.028$                & \pmb{$0.845\pm0.010$}          & $0.830\pm0.007$                & $0.754\pm0.014$
                              & $0.763\pm0.009$                & $0.820\pm0.028$                & $0.826\pm0.015$                & $0.806\pm0.038$                & $0.780\pm0.029$\\
News.ss                       & $0.850\pm0.012$                & $0.775\pm0.016$                & $0.805\pm0.007$                & $0.798\pm0.004$                & $0.800\pm0.013$
                              & $0.802\pm0.010$                & \pmb{$0.904\pm0.820$}          & $0.868\pm0.013$                & $0.832\pm0.019$                & $0.830\pm0.027$\\
News.tpg                      & $0.820\pm0.019$                & $0.592\pm0.025$                & $0.799\pm0.007$                & $0.798\pm0.008$                & $0.720\pm0.018$
                              & $0.728\pm0.019$                & $0.820\pm0.090$                & \pmb{$0.822\pm0.024$}          & $0.796\pm0.009$                & $0.790\pm0.014$\\
News.tpmid                    & $0.846\pm0.023$                & $0.799\pm0.016$                & $0.827\pm0.005$                & $0.830\pm0.000$                & $0.836\pm0.016$
                              & $0.844\pm0.015$                & $0.844\pm0.012$                & $0.860\pm0.019$                & \pmb{$0.870\pm0.010$}          & \pmb{$0.870\pm0.014$}\\
News.tpmis                    & $0.748\pm0.012$                & $0.752\pm0.015$                & $0.684\pm0.015$                & $0.688\pm0.008$                & $0.720\pm0.013$
                              & $0.711\pm0.032$                & $0.482\pm0.022$                & $0.780\pm0.030$                & \pmb{$0.786\pm0.018$}          & $0.728\pm0.049$\\
News.trm                      & $0.780\pm0.020$                & $0.740\pm0.014$                & $0.717\pm0.011$                & $0.728\pm0.018$                & $0.606\pm0.060$
                              & $0.621\pm0.045$                & $0.514\pm0.064$                & \pmb{$0.796\pm0.021$}          & $0.790\pm0.034$                & $0.765\pm0.037$\\
\midrule
Average                       & $0.801\pm0.020$                & $0.769\pm0.044$                & $0.762\pm0.016$                & $0.753\pm0.012$                & $0.756\pm0.020$
                              & $0.737\pm0.027$                & $0.729\pm0.053$                & $0.809\pm0.023$                & $0.773\pm0.030$                & \pmb{$0.833\pm0.021$}\\
\bottomrule
\end{tabular}}
\end{table*}

\subsection{Ablation Study}

We will evaluate the effectiveness of the two key components of our DKMIL, i.e., the data-driven knowledge fusion and the two-level attention mechanism, in this subsection.

Specifically, the data-driven knowledge fusion module is employed to leverage key information on sample acquisition methods from previous studies and improve the model's learning ability.
Therefore, we compared the model's training loss and test accuracy on the musk1 and musk2 data sets with and without this module to access its effectiveness, as shown in Fig. \ref{fig: ablation_musk}.
To make the training loss curve more smooth and resistant the factors such as random initialization of algorithm parameters, Figs. \ref{fig: ablation_musk1} and \ref{fig: ablation_musk2} display the average results of 25 CV experiments.
The experimental findings suggest that the proposed knowledge fusion module enhances the convergence efficiency, classification performance, and stability of the model.

\begin{table*}[!htb]
\caption{Performance comparison of DKMIL with neural network-based methods on the image classification data sets.
}
\label{table: performance_image}
\centering
\resizebox{0.7\textwidth}{!}{
\begin{tabular}{ccccccc}
\toprule
Data set                      & ABMIL                          & GAMIL                          & LAMIL                          & DSMIL                          & MAMIL                          & \pmb{DKMIL}\\
\midrule
MNIST0                        & $0.992\pm0.011$                & $0.980\pm0.000$                & \pmb{$0.996\pm0.009$}          & $0.968\pm0.011$                & \pmb{$0.996\pm0.009$}          & $0.992\pm0.011$\\
MNIST1                        & $0.928\pm0.105$                & $0.952\pm0.018$                & \pmb{$0.988\pm0.012$}          & $0.968\pm0.018$                & $0.852\pm0.135$                & $0.984\pm0.009$\\
MNIST2                        & $0.948\pm0.018$                & $0.936\pm0.017$                & \pmb{$0.976\pm0.009$}          & $0.972\pm0.018$                & \pmb{$0.976\pm0.009$}          & $0.956\pm0.017$\\
MNIST3                        & \pmb{$0.992\pm0.011$}          & $0.972\pm0.011$                & $0.960\pm0.000$                & $0.968\pm0.023$                & $0.960\pm0.000$                & $0.980\pm0.000$\\
MNIST4                        & $0.944\pm0.017$                & $0.956\pm0.030$                & $0.928\pm0.018$                & $0.968\pm0.033$                & \pmb{$0.996\pm0.009$}          & $0.992\pm0.011$\\
MNIST5                        & $0.956\pm0.009$                & $0.972\pm0.011$                & $0.944\pm0.017$                & $0.976\pm0.009$                & \pmb{$0.980\pm0.000$}          & $0.968\pm0.011$\\
MNIST6                        & \pmb{$1.000\pm0.000$}          & \pmb{$1.000\pm0.000$}          & \pmb{$1.000\pm0.000$}          & \pmb{$1.000\pm0.000$}          & \pmb{$1.000\pm0.000$}          & \pmb{$1.000\pm0.000$}\\
MNIST7                        & $0.976\pm0.009$                & $0.964\pm0.022$                & $0.952\pm0.011$                & $0.936\pm0.026$                & \pmb{$0.980\pm0.000$}          & $0.972\pm0.011$\\
MNIST8                        & $0.944\pm0.022$                & \pmb{$0.980\pm0.014$}          & $0.928\pm0.011$                & $0.904\pm0.022$                & $0.976\pm0.017$                & $0.964\pm0.017$\\
MNIST9                        & $0.952\pm0.011$                & \pmb{$0.972\pm0.018$}          & $0.960\pm0.014$                & $0.956\pm0.017$                & $0.968\pm0.011$                & $0.932\pm0.018$\\
CIFAR0                        & $0.720\pm0.028$                & $0.728\pm0.046$                & $0.752\pm0.027$                & $0.712\pm0.023$                & \pmb{$0.760\pm0.051$}          & $0.720\pm0.035$\\
CIFAR1                        & $0.672\pm0.052$                & $0.704\pm0.061$                & \pmb{$0.716\pm0.046$}          & $0.668\pm0.036$                & $0.632\pm0.048$                & \pmb{$0.716\pm0.039$}\\
CIFAR2                        & $0.640\pm0.028$                & $0.672\pm0.027$                & $0.660\pm0.020$                & $0.652\pm0.011$                & $0.648\pm0.018$                & \pmb{$0.684\pm0.017$}\\
CIFAR3                        & $0.676\pm0.036$                & $0.668\pm0.027$                & $0.684\pm0.017$                & $0.664\pm0.036$                & $0.644\pm0.009$                & \pmb{$0.724\pm0.022$}\\
CIFAR4                        & $0.628\pm0.058$                & $0.660\pm0.020$                & $0.620\pm0.028$                & $0.560\pm0.024$                & $0.592\pm0.048$                & \pmb{$0.664\pm0.017$}\\
CIFAR5                        & $0.692\pm0.046$                & $0.664\pm0.009$                & $0.688\pm0.011$                & $0.704\pm0.033$                & $0.676\pm0.026$                & \pmb{$0.716\pm0.017$}\\
CIFAR6                        & $0.664\pm0.043$                & $0.676\pm0.048$                & $0.652\pm0.030$                & $0.604\pm0.017$                & \pmb{$0.724\pm0.054$}          & $0.716\pm0.022$\\
CIFAR7                        & $0.644\pm0.041$                & $0.624\pm0.017$                & $0.616\pm0.026$                & $0.636\pm0.022$                & $0.664\pm0.061$                & \pmb{$0.728\pm0.036$}\\
CIFAR8                        & $0.696\pm0.038$                & $0.696\pm0.026$                & $0.728\pm0.023$                & $0.688\pm0.033$                & $0.732\pm0.036$                & \pmb{$0.740\pm0.028$}\\
CIFAR9                        & \pmb{$0.700\pm0.035$}          & $0.668\pm0.023$                & $0.632\pm0.027$                & $0.676\pm0.030$                & $0.680\pm0.020$                & $0.644\pm0.026$\\
STl0                          & $0.716\pm0.116$                & $0.824\pm0.017$                & $0.844\pm0.022$                & $0.768\pm0.033$                & $0.828\pm0.030$                & \pmb{$0.868\pm0.018$}\\
STL1                          & \pmb{$0.640\pm0.037$}          & $0.584\pm0.036$                & $0.592\pm0.036$                & $0.588\pm0.033$                & $0.612\pm0.036$                & $0.592\pm0.023$\\
STL2                          & $0.680\pm0.037$                & \pmb{$0.808\pm0.033$}          & $0.796\pm0.033$                & $0.728\pm0.046$                & $0.780\pm0.028$                & $0.804\pm0.033$\\
STL3                          & $0.712\pm0.030$                & $0.684\pm0.009$                & $0.684\pm0.009$                & $0.696\pm0.017$                & $0.700\pm0.020$                & \pmb{$0.740\pm0.025$}\\
STL4                          & \pmb{$0.848\pm0.018$}          & $0.844\pm0.009$                & $0.840\pm0.000$                & $0.844\pm0.009$                & $0.844\pm0.009$                & \pmb{$0.848\pm0.011$}\\
STL5                          & \pmb{$0.668\pm0.050$}          & \pmb{$0.668\pm0.039$}          & $0.628\pm0.011$                & $0.628\pm0.011$                & $0.636\pm0.026$                & $0.664\pm0.022$\\
STL6                          & $0.584\pm0.046$                & $0.568\pm0.033$                & $0.564\pm0.022$                & \pmb{$0.664\pm0.038$}          & $0.544\pm0.017$                & $0.620\pm0.032$\\
STL7                          & $0.576\pm0.026$                & $0.544\pm0.055$                & $0.580\pm0.058$                & $0.564\pm0.014$                & $0.620\pm0.037$                & \pmb{$0.624\pm0.033$}\\
STL8                          & $0.760\pm0.032$                & $0.704\pm0.073$                & $0.780\pm0.028$                & $0.764\pm0.017$                & $0.824\pm0.026$                & \pmb{$0.828\pm0.030$}\\
STL9                          & $0.804\pm0.033$                & \pmb{$0.828\pm0.054$}          & $0.820\pm0.024$                & $0.776\pm0.003$                & $0.808\pm0.090$                & $0.760\pm0.049$\\
\midrule
Average                       & $0.778\pm0.035$                & $0.783\pm0.027$                & $0.784\pm0.020$                & $0.773\pm0.022$                & $0.788\pm0.029$                & \pmb{$0.804\pm0.021$}\\
\bottomrule
\end{tabular}}
\end{table*}

\begin{table*}[!htb]
\caption{Performance comparison of DKMIL with neural network-based methods on the VAD and MRI data sets.}
\label{table: performance_application}
\centering
\resizebox{0.7\textwidth}{!}{
\begin{tabular}{ccccccc}
\toprule
Data set                      & ABMIL                          & GAMIL                          & LAMIL                          & DSMIL                          & MAMIL                          & \pmb{DKMIL}\\
\midrule
Shanghai                      & $0.907\pm0.007$                & $0.907\pm0.005$                & $0.917\pm0.006$                & $0.931\pm0.006$                & $0.866\pm0.016$                & \pmb{$0.936\pm0.040$}\\
%UCF-crime                     &                                &                                &                                &                                &                                & \\
Meningioma                    & $0.617\pm0.048$                & $0.580\pm0.051$                & \pmb{$1.000\pm0.000$}          & \pmb{$1.000\pm0.000$}          & $0.583\pm0.047$                & $0.724\pm0.084$\\
Glioma                        & $0.606\pm0.020$                & $0.578\pm0.042$                & \pmb{$1.000\pm0.000$}          & \pmb{$1.000\pm0.000$}          & $0.587\pm0.045$                & $0.719\pm0.038$\\
Pituitary                     & $0.737\pm0.020$                & $0.680\pm0.088$                & \pmb{$1.000\pm0.000$}          & \pmb{$1.000\pm0.000$}          & $0.700\pm0.058$                & $0.847\pm0.083$\\
\bottomrule
\end{tabular}}
\end{table*}

In addition, the inclusion of the knowledge fusion module increases the difficulty of training the model with the key samples.
To address this issue, we introduced a two-level attention module to improve the model's feature extraction ability and enhance its predictability.
To assess the effectiveness of this module, we conducted an ablation study and found that the two-level attention mechanism assigns higher weights to the instances that determine the bag label, while keeping most instances close to 0.
We used kernel density estimation to visualize the distribution of the attention values and confirmed that the two-level attention mechanism is superior to the single-level attention mechanism in assigning instance weights.
Fig. \ref{fig: ablation_attention} provides a visual representation of our findings.

\subsection{Performance Comparison}

Tables \ref{table: performance_benchmark}--\ref{table: performance_application} show the results of performance comparison experiments on benchmark, synthetic, and practical application data sets, respectively.
The experimental results demonstrate that our algorithm performs the best on most data sets, particularly in the areas of drug activity prediction and web recommendation.
For example, DKMIL achieves a classification accuracy of over 90\% on web4 and web5, which is about 10\% higher than the second best performing method.
These advantages are unmatched by comparable methods as they are better equipped to address these concerns.
Furthermore, the average classification performance across all data sets verifies the effectiveness of DKMIL, demonstrating that the data-driven knowledge fusion and two-level attention mechanism are effective at extracting latent knowledge from data sets.
In summary, DKMIL is scalable enough to handle more complex application data sets as well as being able to efficiently handle traditional MIL applications.
To provide a more intuitive demonstration of DKMIL, we plot its attention distribution over three image data sets, encompassing a total of 12 bags, as shown in Fig. \ref{fig: example_image_attention}.
The outcomes indicate that DKMIL has the capability of localizing positive instances well for the well-classified MNIST data set, while assigning almost uniform attention value to all instances in the negative bag.
It is possible that the higher resolution and increased semantic complexity of the A and B data sets make it more difficult for DKMIL to accurately localize positive instances and assign attention values. This could lead to interference and a decrease in overall model performance.

While DKMIL performs well on CIFAR10 and STL10, there is room for improvement on certain data sets, such as MNIST.
This may be due to fixed network parameters not accommodating the higher image resolution and channels of the latter.
Additionally, DKMIL exhibits higher standard deviation in test accuracy on the tumor data set, possibly due to the sizable blank area and lack of professional preprocessing in the original 2D MRI image, which could negatively affect the model's ability to learn.
Further analysis is required for these special cases.
In this regard, the performance of DKMIL on the meningioma data set varies and can be observed through three training accuracy variation curves presented in Figure \ref{fig: performance_tumor}.
The cause of this phenomenon could be due to the algorithm's inability to extract valuable information from the tumor image or the learning rate being insufficient for optimal optimization.
Increasing the number of training epochs may resolve the issue but could lead to heavy hardware loads and contradict the goal of maintaining uniform model parameters.

\begin{figure*}[!htb]
    \centering
    \subfloat[MNIST10]{\includegraphics[width=\hsize]{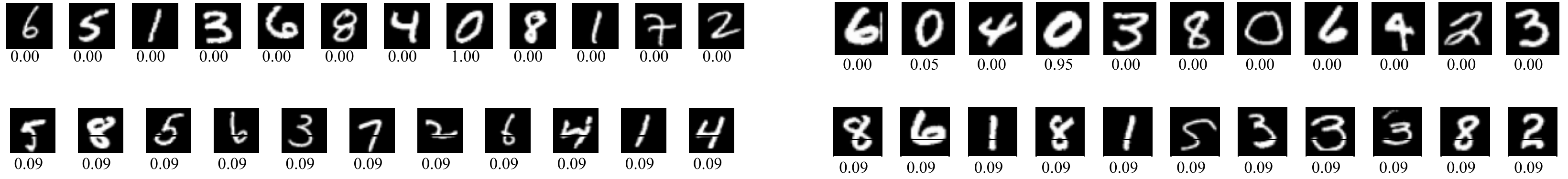}\label{fig: example_mnist}}

    \subfloat[CIFAR10]{\includegraphics[width=\hsize]{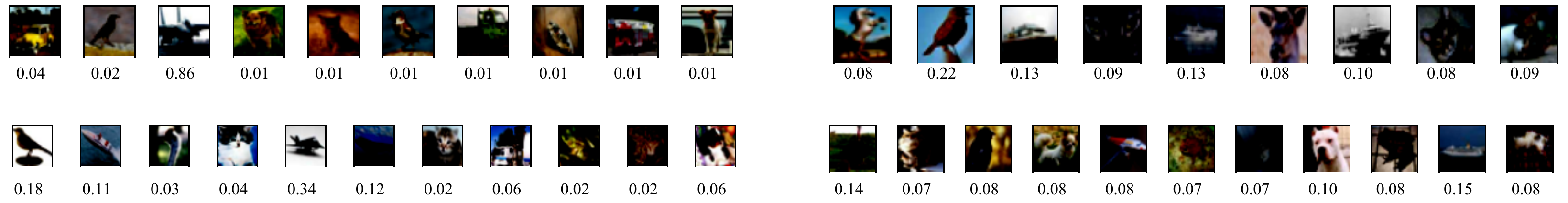}\label{fig: example_cifar10}}

    \subfloat[STL10]{\includegraphics[width=\hsize]{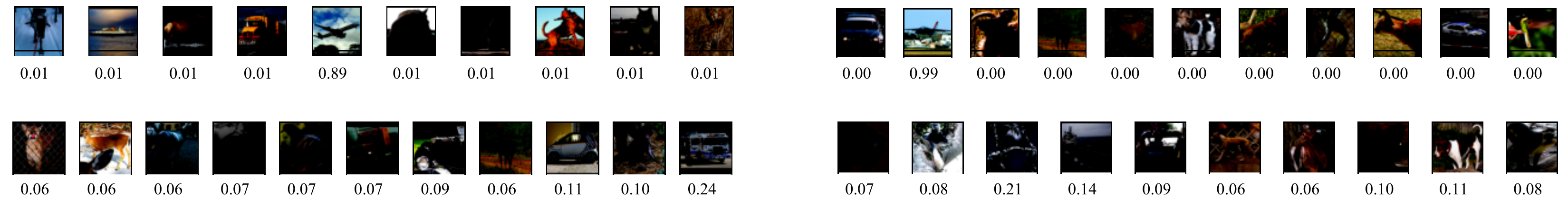}\label{fig: example_stl10}}
    \caption{
    The attention distribution of DKMIL over three image data sets, comprising a total of 12 bags.
    The target class of MNIST0 is zero, while for the other two data sets, the target class is airplane.
    }
    \label{fig: example_image_attention}
\end{figure*}

\begin{table*}[!htb]
\caption{Parameters of MIL neural networks.}
\label{table: parameters}
\centering
\resizebox{0.7\textwidth}{!}{
\begin{tabular}{cccccccc}
\toprule
Data set     & Resolution                 & ABMIL                          & GAMIL                          & LAMIL                          & DSMIL                          & MAMIL                          & \pmb{DKMIL}\\
\midrule
MNIST        & $28 \times 28$             & $4.91e5$                       & $5.55e5$                       & $2.72e5$                       & $1.41e6$                       & $1.49e6$                       & $1.60e5$\\
CIFAR10      & $32 \times 32$             & $7.e7e5$                       & $7.81e5$                       & $3.89e5$                       & $3.32e6$                       & $1.76e6$                       & $1.60e5$\\
STL10        & $96 \times 96$             & $1.11e7$                       & $1.11e7$                       & $5.73e6$                       & $9.75e8$                       & $1.28e7$                       & $2.35e6$\\
Tumor        & $512 \times 512$           & $3.91e8$                       & $3.91e8$                       & $2.01e8$                       & $2.40e9$                       & $3.91e8$                       & $1.57e8$\\
\bottomrule
\end{tabular}}
\end{table*}

Overall, our model achieves superior performance on most data sets, particularly on benchmark data sets, as our average accuracy is several percentage points higher than that of suboptimal algorithms.
Conversely, on the tumor data set, DKMIL demonstrated significantly inferior performance compared to LAMIL and DSMIL. We were surprised by the exceptional performance of these two algorithms. Therefore, we began investigating the model complexity to elucidate the reasons for this phenomenon.
Table 1 displays the parameters of DKMIL and 5 rival algorithms on 4 image data sets with varying resolutions. The findings indicate that our model has significantly fewer network parameters than other neural networks, particularly DSMIL.
However, for the Tumor data set, a parameter scale of such magnitude is insufficient to extract sufficient information for model training.
This observation is also reflected in algorithms such as ABMIL.
It is plausible that LAMIL's exceptional performance can be attributed to the incorporation of multiple optimization objectives to encourage the model to learn key features.
We do not aim to adjust the parameters of DKMIL to make it more compatible with these data sets, as in the prior experiment setup, we fixed all model parameters except for the input layer to ensure its generalizability.

\begin{figure}[!t]
\centering
\includegraphics[width=0.8\linewidth]{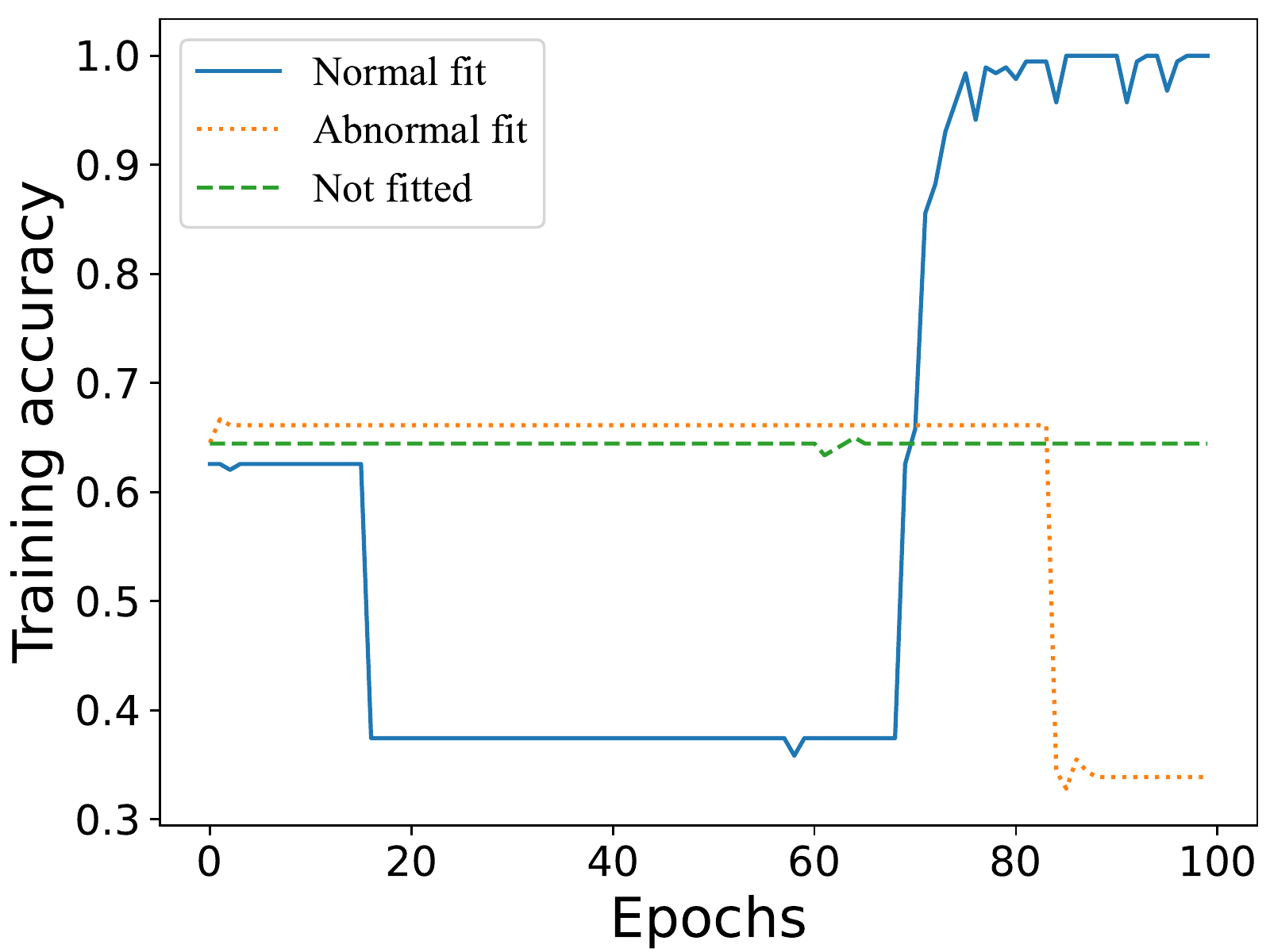}
\caption{
The variation curves of the training accuracy of DKMIL on the meningioma data set.
}
\label{fig: performance_tumor}
\end{figure}

\subsection{Convergence Comparison}

The experiments conducted thus far demonstrate that DKMIL performs exceptionally well on both traditional and synthetic data sets.
However, the algorithm's test accuracy varies significantly when dealing with certain types of data sets, such as MNIST and CIFAR10.
To investigate this phenomenon and also illustrate the differences in learning performance among different deep MIL techniques, we conducted convergence comparison experiments on the MNIST0 and CIFAR0 data sets, which are presented in Fig. \ref{fig: convergence}.
The results show that for the MNIST0 data set, both DKMIL and the other compared algorithms converge effectively, and the test accuracy reaches a stable state in relatively fewer epochs.
Of course, the training accuracy of DKMIL will vary to some amount, this is likely due to the additional processing of knowledge derived from key samples, however, this has a minimal effect on the test results.
For the CIFAR0 data set, all the algorithms show significant overfitting, and the training accuracy might decrease after a certain number of epochs.
Additionally, the test accuracy of the model fluctuates excessively, this might be due to the fact that although the data set is large, it contains too few useful labeled bags.
This implies that we should employ more rational data synthesis techniques and use data sets from more useful applications.

\begin{figure*}[!t]
    \centering
    \subfloat{\includegraphics[width=0.99\hsize]{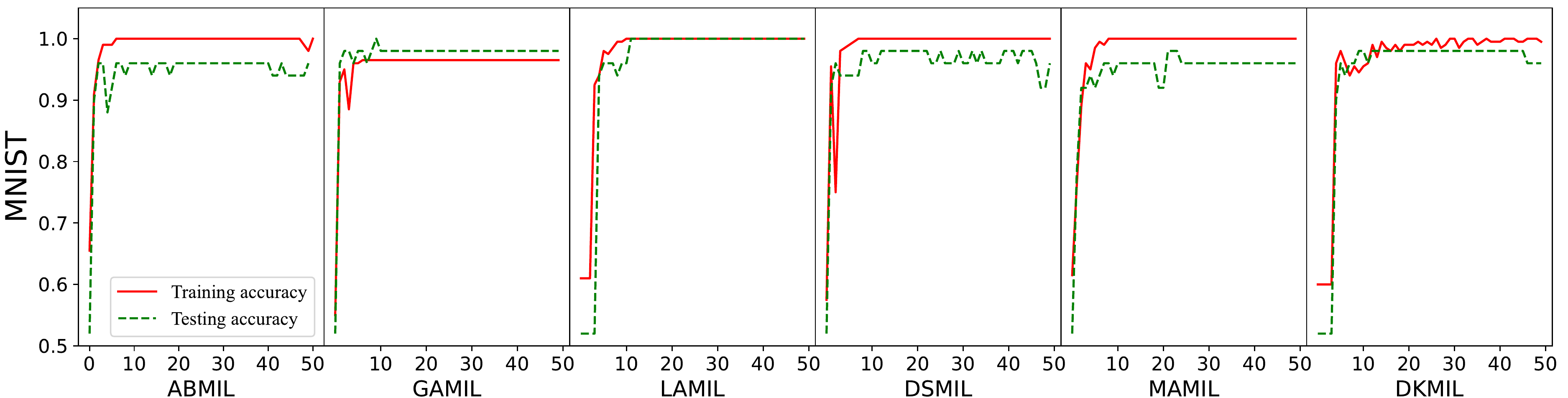}\label{fig: convergence_mnist}}

    \subfloat{\includegraphics[width=0.99\hsize]{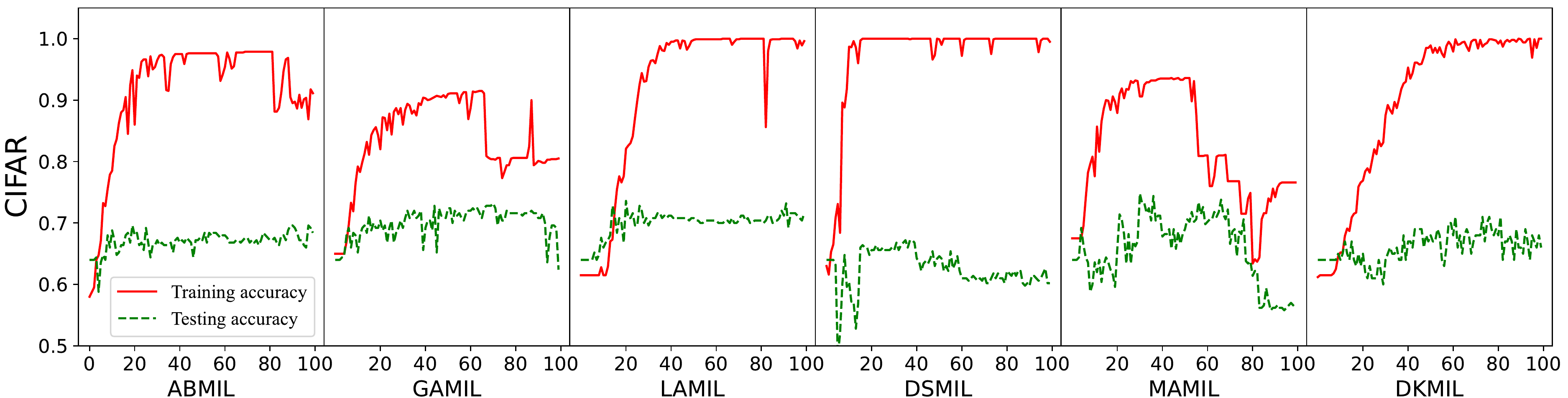}\label{fig: convergence_cifar}}
    \caption{
    Convergence comparison of DKMIL with neural network-based methods on the MNIST0 and CIFAR0 data sets.
    The abscissa represents the training epochs, and the solid and dashed lines represent training accuracy and test accuracy, respectively
    }
    \label{fig: convergence}
\end{figure*}

\begin{table*}[!t]
\caption{
Two-tailed $t$-test results for DKMIL vs. five deep MIL methods on the MNIST and CIFAR10 data sets.}
\label{table: statistical_significance_comparison}
\centering
\resizebox{0.75\textwidth}{!}{
\begin{tabular}{cccccccc}
\toprule
Data sets               & DKMIL-ABMIL     & DKMIL-GAMIL     & DKMIL-LAMIL     & DKMIL-DSMIL     & DKMIL-MAMIL\\
\midrule
MNIST0                  & $3.19e-02$      & $3.56e-05$      & $4.38e-02$      & $7.10e-04$      & $4.56e-05$\\
MNIST1                  & $7.71e-01$      & $5.58e-02$      & $3.47e-01$      & $1.06e-02$      & $3.47e-01$\\
MNIST2                  & $2.71e-01$      & $7.21e-03$      & $5.45e-01$      & $1.11e-01$      & $6.12e-02$\\
MNIST3                  & $4.86e-01$      & $9.55e-02$      & $4.62e-02$      & $1.82e-01$      & $4.62e-02$\\
MNIST4                  & $4.00e-02$      & $1.41e-01$      & $3.94e-11$      & $2.73e-01$      & $3.94e-11$\\
MNIST5                  & $6.72e-04$      & $3.44e-02$      & $1.35e-04$      & $1.66e-01$      & $5.45e-01$\\
MNIST7                  & $3.47e-01$      & $3.47e-01$      & $3.47e-01$      & $3.47e-01$      & $3.47e-01$\\
MNIST8                  & $5.45e-01$      & $4.86e-01$      & $2.03e-02$      & $2.16e-02$      & $1.41e-01$\\
MNIST9                  & $1.43e-01$      & $1.41e-01$      & $3.81e-03$      & $1.25e-03$      & $2.90e-01$\\
CIFAR0                  & $5.11e-02$      & $3.56e-03$      & $1.55e-02$      & $3.75e-02$      & $3.31e-03$\\
CIFAR1                  & $1.00e+00$      & $7.64e-01$      & $1.41e-01$      & $6.78e-01$      & $1.85e-01$\\
CIFAR2                  & $1.67e-01$      & $7.18e-01$      & $1.00e+00$      & $7.71e-02$      & $1.59e-02$\\
CIFAR3                  & $1.72e-02$      & $4.21e-01$      & $7.36e-02$      & $7.21e-03$      & $1.11e-02$\\
CIFAR4                  & $3.37e-02$      & $6.83e-03$      & $1.18e-02$      & $1.27e-02$      & $6.55e-05$\\
CIFAR5                  & $2.17e-01$      & $7.40e-01$      & $1.72e-02$      & $5.05e-05$      & $1.34e-02$\\
CIFAR6                  & $3.05e-01$      & $2.81e-04$      & $1.40e-02$      & $4.88e-01$      & $2.03e-02$\\
CIFAR7                  & $4.36e-02$      & $1.27e-01$      & $5.06e-03$      & $1.73e-05$      & $7.66e-01$\\
CIFAR8                  & $8.96e-03$      & $3.99e-04$      & $5.10e-04$      & $1.27e-03$      & $7.76e-02$\\
CIFAR9                  & $7.33e-02$      & $3.38e-02$      & $4.81e-01$      & $2.91e-02$      & $7.08e-01$\\
\midrule
Accept / Reject         & $12/7$          & $11/8$          & $7/12$          & $8/11$          & $10/9$\\
\bottomrule
\end{tabular}}
\end{table*}

\begin{table*}[!t]
\caption{Vulnerability comparison for DKMIL with neural network-based methods on the MNIST data sets using adversarial perturbation strategy MI-CAP \cite{Zhang:2023:111} ($\xi=0.2$).
$\downarrow$ indicates a decrease in testing accuracy.
}
\label{table: vulnerability comparison}
\centering
\resizebox{0.8\textwidth}{!}{
\begin{tabular}{cccccccc}
\toprule
Data set                & ABMIL                          & GAMIL                          & LAMIL                          & DSMIL                          & MAMIL                          & \pmb{DKMIL}\\
\midrule
MNIST0                  & $0.288\pm0.052$                & $0.252\pm0.062$                & $0.188\pm0.083$                & $0.216\pm0.050$                & $\ \ \ 0.300\pm0.084\downarrow$& $0.224\pm0.038$\\
MNIST1                  & $\ \ \ 0.184\pm0.110\downarrow$& $0.076\pm0.046$                & $0.108\pm0.056$                & $0.124\pm0.062$                & $0.076\pm0.069$                & $0.116\pm0.072$\\
MNIST2                  & $0.392\pm0.131$                & $0.416\pm0.138$                & $\ \ \ 0.420\pm0.111\downarrow$& $0.360\pm0.084$                & $0.312\pm0.039$                & $0.356\pm0.113$\\
MNIST3                  & $0.364\pm0.125$                & $\ \ \ 0.400\pm0.106\downarrow$& $0.316\pm0.063$                & $0.416\pm0.046$                & $0.300\pm0.075$                & $0.288\pm0.026$\\
MNIST4                  & $\ \ \ 0.384\pm0.082\downarrow$& $0.324\pm0.030$                & $0.296\pm0.076$                & $0.356\pm0.041$                & $0.228\pm0.030$                & $0.316\pm0.082$\\
MNIST5                  & $0.268\pm0.097$                & $\ \ \ 0.320\pm0.017\downarrow$& $0.224\pm0.082$                & $0.264\pm0.062$                & $0.272\pm0.046$                & $0.304\pm0.081$\\
MNIST6                  & $\ \ \ 0.352\pm0.023\downarrow$& $0.320\pm0.244$                & $0.204\pm0.017$                & $0.212\pm0.046$                & $0.196\pm0.099$                & $0.264\pm0.050$\\
MNIST7                  & $0.188\pm0.041$                & $0.260\pm0.033$                & $0.104\pm0.103$                & $\ \ \ 0.300\pm0.054\downarrow$& $0.200\pm0.011$                & $0.240\pm0.064$\\
MNIST8                  & $0.356\pm0.092$                & $0.412\pm0.099$                & $0.196\pm0.124$                & $\ \ \ 0.480\pm0.073\downarrow$& $0.336\pm0.067$                & $0.248\pm0.076$\\
MNIST9                  & $0.224\pm0.065$                & $\ \ \ 0.496\pm0.086\downarrow$& $0.364\pm0.071$                & $0.492\pm0.075$                & $0.312\pm0.130$                & $0.254\pm0.045$\\
\midrule
Average                 & $0.300\pm0.082$                & $\ \ \ 0.328\pm0.086\downarrow$& $0.242\pm0.079$                & $0.322\pm0.059$                & $0.253\pm0.065$                & $0.261\pm0.065$\\
\bottomrule
\end{tabular}}
\end{table*}

\subsection{Statistical Significance Comparison}

We have conducted a thorough analysis of the comparison between the performance and learning abilities of our algorithm and those of other algorithms.
These findings confirm the effectiveness of our DKMIL.
However, the unique data-driven knowledge fusion that we have developed sets it apart from other algorithms, which raises the question of whether this remains true from a statistical perspective.
Table \ref{table: statistical_significance_comparison} provides a summary of $p$-value of two-tailed $t$-test between DKMIL and all comparative algorithms in this experimental setup.
All paired $t$-test values are calculated using a $95\%$ confidence level ($\alpha = 0.05$).
According to statistical theory, if the $p$-value is greater than $0.05$, there is no significant difference between the two algorithms.
The results indicate that DKMIL is statistically similar to the rival algorithms.
However, the fundamental concept of DKMIL is distinct from the compared methods as it integrates data-driven knowledge fusion to investigate more intelligent models.
This also demonstrates from another perspective that our method is viable and can be used as an alternative to other techniques.

\subsection{Vulnerability Comparison}

Recent research on the security of the MIL algorithms also offer new perspective for evaluating our algorithm \cite{Zhang:2023:111}.
This reminds us of the importance of enhancing the robustness of the algorithm against adversarial examples.
Although we currently do not take this scenario into account, it is important to expose this problem through experiments in an intuitive manner.
To this end, we conducted experiments using the most aggressive MI-CAP attack with the ``att'' mode.
As a result, as shown in Table \ref{table: vulnerability comparison}, reveal that most algorithms are vulnerable to attacks, leading to reduced testing accuracy and varied predictions of varying degrees, thereby compromising the model's reliability.
Therefore, going forward, we will focus on addressing this issue.

\section{Conclusion}

Our primary goal in writing this article was to deepen our comprehension of intelligence and explore novel methods of integrating it into the MIL algorithm.
To this end, we introduced data-driven knowledge fusion as an initial exploration of this idea and utilized this module to create a robustly scalable interface between the key samples and the model, thereby facilitating model training.
We have theoretically proven the scalability of our approach, while its effectiveness has been demonstrated through multiple experiments.
Although our algorithm has achieved good experimental results, including the best performance on more than half of the data sets, there is still room for improvement and further exploration, and we will focus on addressing this in our future work:
\begin{enumerate}
  \item
  The data-driven knowledge fusion module serves as an effective interface between key samples and models with strong theoretical scalability.
  However, since it is still in its initial exploratory stage, the prior knowledge incorporated into it is limited.
  Thus, it is imperative to extend the module to enable more efficient learning and enhance its adaptability to complex learning environments.
  \item
  Although our algorithm, DKMIL, demonstrates strong learning ability in the experiments, it currently struggles to handle and process certain data sets, such as CIFAR10 and tumor, due to overfitting issues.
  To address this, further investigation into the characteristics of these data sets and modifications to the model architecture may be necessary.
  \item
  DKMIL does not currently incorporate any security considerations.
  However, if the algorithm were to be implemented on data sets such as VAD, it would be essential to integrate adversarial example handling into the processing module.
\end{enumerate}

% Generated by IEEEtran.bst, version: 1.13 (2008/09/30)

\end{document}